\newif\if@restonecol
\newcommand{\argmax}{\operatornamewithlimits{argmax}}
\newcommand{\argmin}{\operatornamewithlimits{argmin}}
\DeclareRobustCommand\onedot{\futurelet\@let@token\@onedot}
\def\onedot{.} %
\def\eg{\emph{e.g}\onedot, }
\def\etal{\emph{et al}\onedot}
\newcommand{\bftab}{\fontseries{b}\selectfont}
\newcommand{\tvec}[1]{\tilde{\mathbf{#1}}}
\newcommand{\vecx}{\mathbf{x}}
\newcommand{\tvecx}{\tvec{x}}
\newcommand{\defeq}{\vcentcolon=}
\newcommand{\mytilde}{\raise.17ex\hbox{$\scriptstyle\mathtt{\sim}$}}
\DeclareMathOperator*{\E}{\mathbb{E}}
\begin{document}

\title{Generalized Coverage for More Robust Low-Budget Active Learning} 

\author{Wonho Bae\inst{1} \and
Junhyug Noh\inst{2} \and
Danica J. Sutherland\inst{1,3}}

\authorrunning{W.~Bae et al.}
\institute{University of British Columbia, Vancouver \and 
Ewha Womans University \and
Alberta Machine Intelligence Institute (Amii) \\
\email{whbae@cs.ubc.ca},
\email{junhyug@ewha.ac.kr}, \email{dsuth@cs.ubc.ca}}

\maketitle

\begin{abstract}
    The ProbCover method of Yehuda et al.\ is a well-motivated algorithm for active learning in low-budget regimes, which attempts to ``cover'' the data distribution with balls of a given radius at selected data points. We demonstrate, however, that the performance of this algorithm is extremely sensitive to the choice of this radius hyper-parameter, and that tuning it is quite difficult, with the original heuristic frequently failing. We thus introduce (and theoretically motivate) a generalized notion of ``coverage,'' including ProbCover's objective as a special case, but also allowing smoother notions that are far more robust to hyper-parameter choice. We propose an efficient greedy method to optimize this coverage, generalizing ProbCover's algorithm; due to its close connection to kernel herding, we call it ``MaxHerding.'' The objective can also be optimized non-greedily through a variant of $k$-medoids, clarifying the relationship to other low-budget active learning methods. In comprehensive experiments, MaxHerding surpasses existing active learning methods across multiple low-budget image classification benchmarks, and does so with less computational cost than most competitive methods.
  \keywords{Active learning \and Set coverage \and Low budget \and Kernel herding}
\end{abstract}

\section{Introduction}
\label{sec:intro}

In active learning,
rather than being handed a dataset with a random set of labeled points (as in passive learning),
a model strategically requests annotations for specific data points.
The aim is to obtain a better final model at a given label budget,
by requesting the ``most informative'' labels.
Some practical domains, such as medical imaging,
require reasonably-accurate models with a \emph{very} small set of labels~\cite{pixelpick2021Shin},
smaller than those used by most previous work in active learning (henceforth ``high-budget'' settings). 
This setting is more achievable than it used to be
thanks to good features from self-supervised learning techniques~\cite{simclr2020chen,scan2020van,dino2021caron};
with strong baseline features, actively trained models
can perform comparably to standard passive models trained on an order of magnitude more data~\cite{typiclust2022hacohen, probcover2022yehuda,low2022mahmood}.

Previous active learning techniques mostly select data points based on estimates of uncertainty, 
but as demonstrated by Hacohen~\etal\cite{typiclust2022hacohen},
in low-budget regimes, it can be far more effective to select representative data points than uncertain ones. 
To that end, Yehuda~\etal\cite{probcover2022yehuda} recently proposed ProbCover,
a greedy algorithm to maximize the \emph{coverage} of the training data.
Their notion of ``coverage,'' however, depends on a radius $\delta$;
we demonstrate that the performance of ProbCover is highly sensitive to the choice of $\delta$.

We thus introduce \textit{generalized coverage} (\cref{subsec:method:coverage}),
which encompasses ProbCover's objective as a special case,
while also allowing for smoother notions of coverage
which are far more robust to hyper-parameter selection.
We propose an algorithm named \textit{MaxHerding} (\cref{subsec:method:max_herding})
to greedily maximize the generalized coverage,
motivated by the data summarization algorithm called kernel herding~\cite{kherding2010chen}.
We also consider a non-greedy algorithm for the same objective (in \cref{subsec:method:non_greedy}),
a variant of $k$-medoids.
Doing so further clarifies the connection to other low-budget active learning approaches,
as depicted in \cref{fig:overview} and \cref{subsec:method:relationship}.

Our experiments (\cref{subsec:exp:sota}) show that MaxHerding outperforms existing active learning methods across various low-budget benchmark datasets, including ``imbalanced'' variants (\cref{subsec:exp:imbal}),
while using less computation cost than most competitive methods.
The non-greedy algorithm $k$-medoids marginally outperforms MaxHerding,
but does so at significantly increased computational cost (\cref{subsec:exp:ablation});
we thus recommend practitioners typically prefer MaxHerding for low-budget active learning.
To summarize our contributions:
\begin{itemize}
    \item We introduce a smoother and more robust notion of coverage, generalizing ProbCover's objective.
    \item We propose MaxHerding, a greedy algorithm to efficiently optimize generalized coverage, motivated by kernel herding.
    \item We illustrate that kernel $k$-medoids can also optimize this objective, helping clarify the relationship to other low-budget active learning methods.
    \item We demonstrate that MaxHerding outperforms existing active learning methods, with less computational cost than most competitive methods, while being more robust to the choice of budget size and features.
\end{itemize}

\section{Background}
\label{sec:background}

\subsection{Active Learning}
The aim of active learning is to reduce the need for manual annotation by strategically selecting data points for annotation. 
There are two major categories of active learning techniques:
uncertainty-based and representation-based methods.

The most common framework is pool-based active learning,
where we repeat the following process at each iteration $t \in [1,2, \dots, T]$:
\begin{enumerate}
    \item Train a pre-defined model on the labeled set $\mathcal{L}_t$ and evaluate it on a test set.
    \item Select a set of data points $\mathcal{S}_t = \{ \vecx_b \}_{b=1}^B$, where $B$ is a fixed budget, from an unlabeled set $\mathcal{U}_t = \{ \vecx_u \}_{u=1}^{\lvert \mathcal{U}_t \rvert}$.
    \item Annotate the selected set $\mathcal{S}_t$, merge it to a labeled set $\mathcal{L}_{t+1} = \mathcal{L}_{t} \cup \mathcal{S}_t$, and remove it from the unlabeled set $\mathcal{U}_{t+1} = \mathcal{U}_{t} \setminus \mathcal{S}_t $.
\end{enumerate}
For simplicity, we do not separately notate the labels $y$ for points in $\mathcal L$.

\subsubsection{Uncertainty-based}
Generally, uncertainty-based approaches leverage the model trained on $\mathcal L_t$ to find new points $\mathcal S_t$.
This approach includes simple yet effective algorithms like entropy~\cite{entropy2014wang}, margin sampling~\cite{margin2001scheffer}, and posterior probability~\cite{heterogeneous1994lewis,sequential1994lewis}.
These depend on the predictions from the current model, which does not take into account 
the potential impact of newly added data points.
\cite{al2009settles,badge2019ash} approximate the expected change in model parameters by measuring the change in loss gradients.
A related category of approaches
estimate expected reduction in error \cite{eer2001roy,eer_gf2003zhu,eer_mi2007guo} 
or expected change in model outputs \cite{emoc_reg2018kading,emoc2014frey,emoc2016kading,lookahead2022mohamadi}.

\subsubsection{Representation-based} The objective of this approach is to select examples from the unlabeled set that are considered the most ``typical'', expecting that performing effectively on these examples will generalize well to the entire unseen dataset.
Traditional methods are based on clustering algorithms such as $k$-means~\cite{kmeans2003xu} and variants such as $k$-means$^{++}$~\cite{badge2019ash}, $k$-medoids~\cite{kmedoids2016aghaee}, or $k$-medians~\cite{kmedian2012voevodski}.
Another option is determinantal point processes~\cite{dpp2012kulesza}.
There are also some hybrid methods that combine uncertainty and representation-based methods~\cite{hybrid_active_meta2021al,badge2019ash,alfa_mix2022parvaneh}.

In pool-based batch active learning, for typical values of the labeling budget $B T$,
uncertainty-based methods often outperform representation-based ones.

\subsection{Low-Budget Active Learning}
\label{subsec:bg:low_budget_al}

In many modern problems, however, settings where $B T$ is very small
(say, ten times the number of classes)
have become highly relevant.
In this low-budget regime,
popular uncertainty-based active learning methods such as entropy~\cite{entropy2014wang} and margin sampling~\cite{margin2001scheffer} are less effective even than random sampling~\cite{typiclust2022hacohen}:
the model simply is not good enough to ``know what it doesn't know.''
Thus, recent work on low-budget active learning
\cite{typiclust2022hacohen,probcover2022yehuda,low2022mahmood} instead selects data points which are {representative} of the unlabeled set
$\mathcal U$.

\subsubsection{Maximum coverage}

ProbCover~\cite{probcover2022yehuda} aims to select data points which maximally \emph{cover} the data distribution:
that is, it aims to have most of the data distribution lie within a distance at most $\delta$ from a labeled point.
This is motivated in part by their bound on the accuracy of a nearest neighbor (1-NN) classifier $\hat f$, compared to the true (deterministic) labeling function $f$:
\begin{equation}
\begin{split}
    \Pr_{\vecx}( f(\vecx) \ne \hat f(\vecx) )
    &\le
    \bigg( 1 - \Pr_{\vecx}\Big(\exists \vecx' \in \mathcal L \;\text{ s.t. }\; \lVert \vecx - \vecx' \rVert \le \delta \Big) \bigg)
    \\ 
    &+ \bigg( 1 - \Pr_{\vecx}\Big( \forall \vecx' \text{ s.t. } \lVert \vecx - \vecx' \rVert \le \delta, \;\; f(\vecx) = f(\vecx') \Big) \bigg)
.\end{split}
\label{eq:probcover_bound}
\end{equation}
The first probability on the right-hand side
is the \emph{coverage}, the probability that a randomly selected point is near a labeled point;
the second is the \emph{purity}, the probability that a point does not lie near a decision boundary.
Yehuda~\etal\ propose to choose a $\delta$%
\footnote{In low-budget settings, holding out a validation set for tuning hyperparameters such as $\delta$ may significantly reduce the training set size and severely degrade performance.}
such that the second term is acceptably small,
and then chooses $\mathcal L$ to minimize an empirical estimate of the first term.
Exact optimization is NP-hard,
and thus ProbCover greedily maximizes the estimated coverage~\cite{probcover2022yehuda}.

\subsubsection{Self-supervised features}
It is hard to learn good features based on a handful of labeled data points.
Previous work in low-budget active learning has thus used representations from self-supervised learning on the unlabeled data points, particularly with SimCLR~\cite{simclr2020chen}, which are then used in a 1-NN or linear classifier.
We follow previous works in focusing on SimCLR features, but also consider the more recent methods SCAN~\cite{scan2020van} and DINO~\cite{dino2021caron}.

\subsection{Kernel Herding}
\label{subsec:bg:kernel_herding}
Kernel herding~\cite{kherding2010chen} is a method to summarize an arbitrary probability distribution with a ``super-sample'':
a set of points such that empirical averages of $f$ on the super-sample
converge to the true expectation
faster than simple Monte Carlo averages over i.i.d.\ samples.
It does so by greedily aligning \emph{kernel mean embeddings} \cite{berlinet-thomas-agnan,Muandet_2017} computed in a reproducing kernel Hilbert space (RKHS) $\mathcal{H}$ with kernel $k$.
In our setting,
we aim to match the empirical distribution of the points $\mathcal U = \{ \vecx_n \}_{n=1}^{N}$.
After selecting $\{ \tilde\vecx_l \}_{l=1}^L$, the $(L+1)$-th chosen point is
\begin{align}
    \tilde\vecx_{L+1}
    &\in  \argmax_{\tilde{\vecx}} \underbrace{\frac{1}{N} \sum_{n=1}^N k(\vecx_n, \tilde{\vecx})}_\text{reward} - \underbrace{\frac{1}{L+1} \sum_{l=1}^L k(\tilde \vecx_l, \tilde{\vecx})}_\text{penalty} \label{eq:kherding_approx}
.\end{align}
The first ``reward'' term encourages $\tilde\vecx_{L+1}$ to be ``similar'' to the target points,
while the second ``penalty'' term discourages it from overlapping with already-selected data points.
Kernel herding can be seen \cite{herding_cond2012bach} as a type of Frank-Wolfe algorithm~\cite{frank1956algorithm},
also known as conditional gradient~\cite{cond_grad1966levitin};
it is also closely connected~\cite{wherding2012huszar}
to Bayesian quadrature~\cite{bq2003rasmussen}.
Stein Points~\cite{steinpoints2018chen} is a recent variant.

Herding converges much faster than the $\mathcal O(1 / \sqrt L)$ rate of i.i.d.\ sampling:
\begin{proposition}[Kernel herding convergence \cite{kherding2010chen}]
\label{thm:kernel-herding}
    Let $\vecx$ be distributed such that $k(\vecx, \vecx)$ is bounded for a positive definite kernel $k$,
    and assume a mild regularity condition which is typically satisfied as long as $\vecx$ is not concentrated at a single point.
    Then, for any function $g(\vecx)$ in the RKHS $\mathcal H$ with kernel $k$,
    \[ \left\lvert \frac1N \sum_{n=1}^N g(\vecx_n) - \frac1L \sum_{l=1}^L g(\tilde\vecx_l) \right\rvert = \mathcal O\left( \frac{\lVert g \rVert_{\mathcal H}}{L} \right) .\]
\end{proposition}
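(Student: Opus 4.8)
The plan is to lift the scalar estimate to the \emph{kernel mean embeddings} in $\mathcal H$. Write $\mu \defeq \frac1N \sum_{n=1}^N k(\vecx_n,\cdot) \in \mathcal H$ for the (empirical) mean embedding of $\mathcal U$ and $\hat\mu_L \defeq \frac1L\sum_{l=1}^L k(\tvecx_l,\cdot)$ for its herding approximation. By the reproducing property, $\frac1N\sum_n g(\vecx_n) - \frac1L\sum_l g(\tvecx_l) = \langle g,\, \mu - \hat\mu_L\rangle_{\mathcal H}$, so Cauchy--Schwarz gives
\[ \left\lvert \frac1N \sum_{n=1}^N g(\vecx_n) - \frac1L \sum_{l=1}^L g(\tvecx_l) \right\rvert \;\le\; \lVert g\rVert_{\mathcal H}\; \lVert \mu - \hat\mu_L \rVert_{\mathcal H} , \]
and it suffices to prove $\lVert \mu - \hat\mu_L\rVert_{\mathcal H} = \mathcal O(1/L)$ with a constant depending only on $k$ and the distribution of $\vecx$.

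Next I would track the herding \emph{residual} $w_L \defeq (L+1)\mu - \sum_{l=1}^L k(\tvecx_l,\cdot)$, which starts at $w_0 = \mu$, updates by $w_L = w_{L-1} + \mu - k(\tvecx_L,\cdot)$, and satisfies $\mu - \hat\mu_L = (w_L - \mu)/L$. Unwinding the inner products in \eqref{eq:kherding_approx} shows its greedy rule is exactly $\tvecx_L \in \argmax_{\tvecx}\, \langle w_{L-1},\, k(\tvecx,\cdot)\rangle_{\mathcal H}$ (the penalty normalization is a positive scalar and does not affect the $\argmax$). Expanding the square,
\[ \lVert w_L\rVert_{\mathcal H}^2 \;=\; \lVert w_{L-1}\rVert_{\mathcal H}^2 \;-\; 2\,\big\langle w_{L-1},\, k(\tvecx_L,\cdot) - \mu \big\rangle_{\mathcal H} \;+\; \lVert k(\tvecx_L,\cdot) - \mu \rVert_{\mathcal H}^2 . \]
Since $\mu = \E_{\vecx}[k(\vecx,\cdot)]$, the greedily chosen point attains at least the average inner product, $\langle w_{L-1}, k(\tvecx_L,\cdot)\rangle \ge \langle w_{L-1},\mu\rangle$, so the middle term is $\le 0$; and $k(\vecx,\vecx)\le B$ gives $\lVert k(\tvecx,\cdot)\rVert_{\mathcal H} = \sqrt{k(\tvecx,\tvecx)} \le \sqrt B$ and $\lVert\mu\rVert_{\mathcal H}\le\sqrt B$, so the last term is bounded by a constant $R^2 \le 4B$. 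This already yields $\lVert w_L\rVert^2 \le B + R^2 L$ and the Monte-Carlo rate $\lVert \mu - \hat\mu_L\rVert = \mathcal O(1/\sqrt L)$ --- but not yet $\mathcal O(1/L)$.

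The sharpening to $\mathcal O(1/L)$ is exactly where the regularity condition enters, and is the crux of the argument. That condition amounts to requiring $\mu$ to lie in the \emph{relative interior} of the convex hull $\mathcal M \defeq \mathrm{conv}\{k(\tvecx,\cdot)\}$ of the candidate feature maps: there is an $r>0$ such that the Hilbert ball of radius $r$ around $\mu$, intersected with the closed affine hull of $\mathcal M$, lies in $\mathcal M$ (this can fail only in degenerate cases such as a point mass, matching the informal statement). Because every update keeps $w_L$ in $\mu + \overline{\mathrm{span}}\{k(\tvecx,\cdot)-\mu\}$, testing $\langle w_{L-1},\cdot\rangle$ against a feasible radius-$r$ perturbation of $\mu$ upgrades the greedy inequality to $\langle w_{L-1},\, k(\tvecx_L,\cdot)-\mu\rangle \ge r\,\lVert w_{L-1}\rVert - c$ for a constant $c = \mathcal O(\sqrt B)$. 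Substituting into the identity above gives the recursion $\lVert w_L\rVert^2 \le \lVert w_{L-1}\rVert^2 - 2r\lVert w_{L-1}\rVert + (R^2 + 2c)$; a short induction then shows $\lVert w_{L-1}\rVert$ can never climb past $\approx (R^2+2c)/r$, so $\lVert w_L\rVert = \mathcal O(1)$. Hence $\lVert \mu - \hat\mu_L\rVert = \lVert w_L - \mu\rVert/L \le (\lVert w_L\rVert + \sqrt B)/L = \mathcal O(1/L)$, which together with the first paragraph proves the proposition.

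The main obstacle is the geometric step in the last paragraph: pinning down precisely which subspace the residual $w_L$ inhabits relative to $\mathcal M$, and verifying that the interior-ball condition transfers to a lower bound on the linear functional $\langle w_{L-1},\cdot\rangle$ that is proportional to $\lVert w_{L-1}\rVert$ (this is the point that historically needed care, and is cleanest via the equivalence between kernel herding and a Frank--Wolfe scheme on $\min_{\nu\in\mathcal M}\tfrac12\lVert\nu-\mu\rVert_{\mathcal H}^2$). Everything else --- the reproducing property, Cauchy--Schwarz, the greedy inequality, the boundedness bounds, and the ``energy stays bounded'' induction --- is routine bookkeeping.
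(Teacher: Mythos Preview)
The paper does \emph{not} give its own proof of this proposition: it is stated with a citation to Chen~\etal~\cite{kherding2010chen} and used as a black box (the only proofs in the paper and its appendix are for \cref{thm:ourbound}, \cref{thm:bound}, \cref{thm:submodular-guarantee}, \cref{thm:maxherding}, and \cref{thm:k-medoids}). So there is nothing in the paper to compare your attempt against.

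That said, your sketch is essentially the standard argument from the cited source. The reduction to $\lVert \mu-\hat\mu_L\rVert_{\mathcal H}$ via the reproducing property and Cauchy--Schwarz is the right first move; the residual $w_L$ you track is precisely the ``herding state'' of \cite{kherding2010chen}; your square expansion and the observation that the greedy choice makes the cross term nonpositive are correct; and your identification of the ``mild regularity condition'' with $\mu$ lying in the relative interior of $\mathrm{conv}\{k(\tvecx,\cdot)\}$ is exactly what is needed. The one place to be a little more careful is the step where you pass from the interior-ball condition to $\langle w_{L-1}, k(\tvecx_L,\cdot)-\mu\rangle \ge r\lVert w_{L-1}\rVert - c$: the ball lives in the \emph{affine} hull, so the perturbation only buys you $r\lVert P w_{L-1}\rVert$ for the projection $P$ onto $\overline{\mathrm{span}}\{k(\tvecx,\cdot)-\mu\}$, and you need the (easy) observation that $w_{L-1}-\mu$ lies in that span so $\lVert P w_{L-1}\rVert \ge \lVert w_{L-1}\rVert - \lVert (I-P)\mu\rVert$. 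With that clarification the quadratic recursion and the boundedness-of-energy induction go through as you describe. Your closing remark that the cleanest route is via the Frank--Wolfe equivalence of \cite{herding_cond2012bach} is also apt, and is the viewpoint the present paper alludes to just before the proposition.
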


\begin{figure*}[t!]
    \centering
    \includegraphics[width=\linewidth]{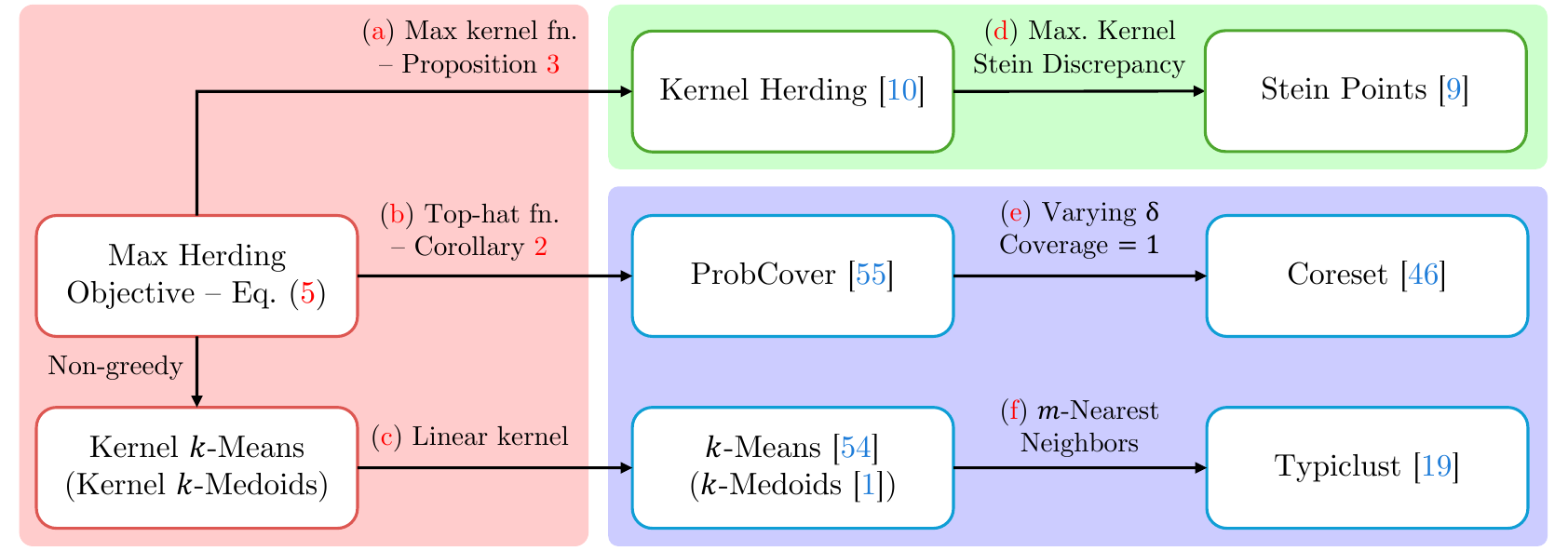}
    \caption{
    Illustration of the connection between our proposed methods (in red) and the existing active learning (in blue) as well as herding methods (in green).
    We describe (\textcolor{red}{a})--(\textcolor{red}{c}) in \cref{sec:method}. For (\textcolor{red}{d})--(\textcolor{red}{f}), please refer to \cite{steinpoints2018chen}, \cite{probcover2022yehuda}, and \cite{typiclust2022hacohen}, respectively.
    }
    \label{fig:overview}
\end{figure*}

Active learning was mentioned as an application by Chen \etal~\cite{kherding2010chen},
but this has not seen much uptake in practice.

\section{Our Method}
\label{sec:method}

\subsection{Generalized Coverage}
\label{subsec:method:coverage}
We begin with a more general version of the bound \eqref{eq:probcover_bound} of Yehuda  \etal~\cite{probcover2022yehuda}.

\begin{restatable}{theorem}{ourbound}
    Let $\hat f$ denote a 1-NN classifier trained on $\left\{ \big(\vecx_l, f(\vecx_l) \big) \right\}_{l=1}^{\lvert \mathcal L \rvert}$,
    for a deterministic labeling function $f$.
    Let $k$ be a nonnegative function
    such that for any $\vecx$,
    if $\lVert \vecx_1 - \vecx \rVert \le \lVert \vecx_2 - \vecx \rVert$,
    then $k(\vecx, \vecx_1) \ge k(\vecx, \vecx_2)$.
    Then
    \begin{equation}
        \Pr_{\vecx}\left( f(\vecx) \neq \hat{f}(\vecx) \right)
        \le
        \left( 1 - \E_{\vecx}\left[ \max_{\vecx' \in \mathcal{L}} k(\vecx, \vecx') \right] \right)
        + \E_{\vecx}\left[ \max_{\vecx' : f(\vecx) \ne f(\vecx')} k(\vecx, \vecx') \right]
        \label{eq:ourbound}
    .\end{equation}
    \label{thm:ourbound}
\end{restatable}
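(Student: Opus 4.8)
The plan is a pointwise argument followed by integration. I will show that for every $\vecx$,
\[
\mathbbm{1}\!\left[ f(\vecx) \ne \hat f(\vecx) \right]
\;\le\;
\left(1 - \max_{\vecx' \in \mathcal{L}} k(\vecx,\vecx')\right)
+ \max_{\vecx' : f(\vecx) \ne f(\vecx')} k(\vecx,\vecx') ,
\]
and then take $\E_\vecx$ of both sides, using $\Pr_\vecx(f(\vecx)\ne\hat f(\vecx)) = \E_\vecx \mathbbm{1}[f(\vecx)\ne\hat f(\vecx)]$ together with linearity of expectation; the result is exactly \eqref{eq:ourbound}. This is the natural generalization of the union-bound proof behind \eqref{eq:probcover_bound}: the ball indicator $\mathbbm{1}[\lVert\vecx-\vecx'\rVert\le\delta]$ is replaced by a general $k$, and the two ``$\exists$/$\forall$ over a ball'' events become the two maxima. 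Only three properties of $k$ enter — nonnegativity, the fact that $k(\vecx,\cdot)$ is nonincreasing in distance, and the normalization $k(\vecx,\vecx)\le 1$; positive-definiteness of $k$ is \emph{not} needed here (it is used only later, for the herding connection).

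To prove the pointwise inequality, fix $\vecx$ and let $\vecx^\star \in \argmin_{\vecx'\in\mathcal{L}}\lVert\vecx-\vecx'\rVert$ be a labeled point realizing the $1$-NN rule, so $\hat f(\vecx) = f(\vecx^\star)$. Since $\vecx^\star$ is (weakly) closest to $\vecx$ among points of $\mathcal{L}$ and $k(\vecx,\cdot)$ is nonincreasing in distance, $\max_{\vecx'\in\mathcal{L}} k(\vecx,\vecx') = k(\vecx,\vecx^\star)$. Now split on whether $\hat f$ errs at $\vecx$. If $f(\vecx)=f(\vecx^\star)$, the left side is $0$ and the right side is $\ge 0$, because the purity term is nonnegative and $k(\vecx,\vecx^\star)\le k(\vecx,\vecx)\le 1$ (the first inequality by monotonicity applied to the distances $0$ and $\lVert\vecx-\vecx^\star\rVert$). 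If instead $f(\vecx)\ne f(\vecx^\star)$, the left side is $1$; but then $\vecx^\star$ is itself an admissible competitor in the purity maximum, so $\max_{\vecx':f(\vecx)\ne f(\vecx')} k(\vecx,\vecx') \ge k(\vecx,\vecx^\star)$, and the right side is $\ge \bigl(1-k(\vecx,\vecx^\star)\bigr) + k(\vecx,\vecx^\star) = 1$. Either way the inequality holds, and integrating it over $\vecx$ gives the claim.

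I do not expect a real obstacle; the content is entirely in the elementary case split above. The points that need a word of care are: tie-breaking in the $1$-NN rule (harmless — whichever nearest labeled point $\hat f$ uses simultaneously attains the coverage maximum and, when $\hat f$ errs, is admissible for the purity maximum); the domain over which the purity maximum is taken, which I read as the support of $\vecx$ (equivalently all of $\mathcal{X}$), so that $\vecx^\star$ is indeed admissible; the bound $k(\vecx,\vecx)\le 1$, the one quantitative hypothesis on which nonnegativity of the right-hand side rests (and which holds for every kernel instantiated in the paper, including the ProbCover indicator); and measurability of $\vecx\mapsto\max_{\vecx'\in\mathcal{L}}k(\vecx,\vecx')$, which is a finite maximum of measurable functions and hence routine.
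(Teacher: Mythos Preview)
Your proof is correct and follows essentially the same route as the paper's: both argue via the identical case split on whether the nearest labeled point $\vecx^\star$ satisfies $f(\vecx^\star)=f(\vecx)$, using monotonicity to identify $\max_{\vecx'\in\mathcal L}k(\vecx,\vecx')=k(\vecx,\vecx^\star)$ and, in the error case, that $\vecx^\star$ is admissible in the purity maximum. You are in fact more careful than the paper in flagging the implicit normalization $k(\vecx,\vecx)\le 1$; the paper's proof needs it too (silently, in the step $\mathbbm{1}[\cdot]\,(1-\max k)\le 1-\max k$), and without it the bound can fail.
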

The proof is given in \cref{app:subsec:ourbound}.
This result is in fact a strict generalization of \eqref{eq:probcover_bound},
as shown in the following result (proved in \cref{app:subsec:bound}).
\begin{restatable}{corollary}{bound} \label{thm:gencov-tophat}
    Let $k$ be the top-hat function defined as follows for a given $\delta$:
    \[ k(\vecx, \vecx') \defeq \begin{cases}
        1 &\text{if } \lVert \vecx - \vecx' \rVert \le \delta \\
        0 &\text{otherwise }
        .\end{cases}
    \]
    Then the statement of \eqref{eq:ourbound} becomes identical to that of \eqref{eq:probcover_bound}.
    \label{thm:bound}
\end{restatable}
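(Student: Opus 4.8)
The statement to be verified is an identity between the two right-hand sides (the left-hand sides of \eqref{eq:ourbound} and \eqref{eq:probcover_bound} are already the same). The plan is therefore to substitute the top-hat $k$ into each of the two expectations appearing in \eqref{eq:ourbound} and observe that each collapses to the expectation of a $\{0,1\}$-valued indicator, i.e.\ a probability.

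For the coverage term: with the top-hat kernel, $\max_{\vecx' \in \mathcal{L}} k(\vecx, \vecx')$ equals $1$ exactly when some $\vecx' \in \mathcal{L}$ satisfies $\lVert \vecx - \vecx' \rVert \le \delta$, and equals $0$ otherwise (taking $\mathcal{L} \ne \emptyset$, which holds after the first query). Hence it is the indicator of the event $\{\, \exists\, \vecx' \in \mathcal{L} : \lVert \vecx - \vecx' \rVert \le \delta \,\}$, and taking $\E_{\vecx}$ yields $\Pr_{\vecx}(\exists\, \vecx' \in \mathcal{L} \text{ s.t. } \lVert \vecx - \vecx' \rVert \le \delta)$ --- the first term of \eqref{eq:probcover_bound}. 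For the purity term: identically, $\max_{\vecx' : f(\vecx) \ne f(\vecx')} k(\vecx, \vecx')$ is the indicator of $\{\, \exists\, \vecx' : f(\vecx) \ne f(\vecx') \text{ and } \lVert \vecx - \vecx' \rVert \le \delta \,\}$, under the convention that a maximum over the empty set is $0$ (harmless because $k \ge 0$ and \eqref{eq:ourbound} is only an upper bound; this case arises only for constant $f$, where the purity term ought to vanish). That event is the complement of ``$\forall\, \vecx'$ with $\lVert \vecx - \vecx' \rVert \le \delta$, $f(\vecx) = f(\vecx')$'', so $\E_{\vecx}$ of its indicator is $1 - \Pr_{\vecx}(\forall\, \vecx' \text{ s.t. } \lVert \vecx - \vecx' \rVert \le \delta,\; f(\vecx) = f(\vecx'))$ --- the second term of \eqref{eq:probcover_bound}. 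Adding the two identities gives the claim.

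There is essentially no obstacle: the whole argument is the elementary fact that the expectation of an indicator is a probability. The only points needing a word of care are the degenerate cases at the boundary --- an empty $\mathcal{L}$ (excluded in practice) and a constant labeling $f$ (absorbed into the empty-maximum convention) --- and, if one wants to regard this corollary as a bona fide special case of \cref{thm:ourbound} rather than a free-standing identity, checking that the top-hat $k$ satisfies that theorem's monotonicity hypothesis, which it does since increasing $\lVert \vecx - \vecx' \rVert$ beyond $\delta$ only drops $k$ from $1$ to $0$.
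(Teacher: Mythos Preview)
Your proposal is correct and follows essentially the same approach as the paper's own proof: substitute the top-hat $k$ into each term of \eqref{eq:ourbound}, observe that each $\max$ of indicators is itself an indicator, and rewrite the resulting expectations as the corresponding probabilities in \eqref{eq:probcover_bound}. Your additional remarks on the degenerate cases and the monotonicity hypothesis are more careful than the paper's version but do not alter the argument.
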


We define the \emph{generalized coverage} as the term in \eqref{eq:ourbound} corresponding to the coverage term of \eqref{eq:probcover_bound}.
Similarly to ProbCover, we propose to use a $k$ such that the second (impurity) term of \eqref{eq:ourbound} is small,
and then choose $\mathcal L$ to maximize the generalized coverage,
guaranteeing good downstream performance of $\hat f$.

\begin{definition}[Generalized Coverage]
For a labeled set $\mathcal{L}$ and a real-valued function $k$,
the \emph{generalized coverage} is defined as
    \begin{equation}
        \mathrm{C}_k(\mathcal{L}) \defeq \E_{\vecx} \left[ \max_{\vecx'\in\mathcal{L}} k(\vecx, \vecx') \right]
        \label{eq:gen-coverage}
    .\end{equation}
\end{definition}
We do not require $k$ to be positive definite (or integrate to one),
but typically use 
a Gaussian kernel $k(\vecx, \tvecx) = \exp\left( - \frac{1}{2\sigma^2} \lVert \vecx - \tvecx \rVert^2 \right)$.
Both the Gaussian kernel and the top-hat function satisfy the conditions of \cref{thm:ourbound}.

\subsubsection{Extension to a linear classifier}
The bounds so far have assumed a 1-NN classifier $\hat f$.
In practice, however, linear classifiers are more common.
In the low-budget regime where $n \ll d$,
gradient descent for logistic regression will find the max-margin separator~\cite{soudry2018implicit}.
Moreover, for high-dimensional normalized features,
it is plausible that a ``successful'' choice of $\mathcal L$ with one data point per class
will have nearly-orthogonal data points.
Lemma 1 of Bae~\etal\cite{active_meta2023bae} shows that in this case,
the max-margin linear separator in fact exactly agrees with the 1-NN classifier.
Thus, motivations based on 1-NN classifiers should give some intuition about the performance of linear classifiers as well.

\subsection{Greedy Maximization: MaxHerding}
\label{subsec:method:max_herding}

Our main algorithm is greedy maximization of $C_k(\mathcal L)$,
which we call \emph{MaxHerding}.

The generalized coverage is defined in terms of an expectation over $\vecx$,
but we have access only via samples.
Thus, we use the Monte Carlo approximation
\[
    C_k(\mathcal L)
    = \E_{\vecx}\left[ \max_{\vecx' \in \mathcal L} k(x, x') \right]
    \approx \frac1N \sum_{n=1}^N \left(\max_{\vecx' \in \mathcal L} k(\vecx_n, \vecx')\right)
    =: \hat C_k(\mathcal L)
.\]
If $a \le k(\vecx, \vecx') \le b$,
then
$\Pr\left( \lvert \hat C_k(\mathcal L) - C_k(\mathcal L) \rvert \le (b - a) \sqrt{\frac 2 N \log\frac2\delta} \right) \ge 1 - \delta$
by Hoeffding's inequality.
For the top-hat kernel of ProbCover, $b - a = 1$.
In active learning settings, $N$ (the size of the unlabeled pool) is typically very large, at least on the order of tens of thousands, so $\hat C_k$ will be very close to $C_k$.

The improvement in $\hat C_k(\mathcal L)$ from adding one more entry $\tvecx$ to $\mathcal L$ is
\begin{equation}
    \max_{\tilde\vecx} \left(
        \frac1N \sum_{n=1}^N \max_{\vecx' \in \mathcal L \cup \{\tilde\vecx\}} k(\vecx_n, \vecx')
        -
        \frac1N \sum_{n=1}^N \max_{\vecx' \in \mathcal L} k(\vecx_n, \vecx')
    \right)
    \label{eq:max_herding}
.\end{equation}
MaxHerding simply picks the corresponding maximizer $\tvecx$ from \eqref{eq:max_herding} at each step.

\begin{proposition} \label{thm:submodular-guarantee}
For $\mathcal L$ selected by MaxHerding with a nonnegative function $k$,
\begin{equation}
    \hat C_k(\mathcal L)
    \ge
        \left(1 - \frac1e\right)
        \max_{\mathcal S : \lvert \mathcal S \rvert \le \lvert \mathcal L \rvert} \hat C_k(\mathcal S)
\label{eq:submodular-guarantee}
.\end{equation}
\end{proposition}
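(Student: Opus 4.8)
The plan is to recognize \eqref{eq:submodular-guarantee} as an instance of the classical guarantee of Nemhauser, Wolsey, and Fisher for greedy maximization of a normalized, monotone, submodular set function subject to a cardinality constraint, with ground set taken to be the (finite) unlabeled pool $\mathcal{U}$ from which MaxHerding selects points. Since the update \eqref{eq:max_herding} adds, at every step, exactly the element of maximal marginal gain, MaxHerding \emph{is} the greedy algorithm for $\hat C_k$; so all that remains is to verify that $\hat C_k$ has the three required structural properties.

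First I would write $\hat C_k(\mathcal{L}) = \frac1N \sum_{n=1}^N h_n(\mathcal{L})$, where $h_n(S) \defeq \max_{\vecx' \in S} k(\vecx_n, \vecx')$ with the convention $h_n(\emptyset) = 0$ (consistent with $k \ge 0$, this is the natural lower value of the empty max, and it makes $\hat C_k(\emptyset) = 0$, i.e.\ $\hat C_k$ is normalized — this is where nonnegativity of $k$ is used). A nonnegative combination of monotone submodular functions is monotone submodular, so it suffices to treat a single $h_n$. Monotonicity of $h_n$ is immediate, since enlarging $S$ can only enlarge a maximum. For submodularity, the key step is the marginal-gain identity: for any $S$ and $e \notin S$,
\[
    h_n(S \cup \{e\}) - h_n(S)
    = \max\!\big(k(\vecx_n, e),\, h_n(S)\big) - h_n(S)
    = \big(k(\vecx_n, e) - h_n(S)\big)_+ ,
\]
with $(\cdot)_+$ the positive part. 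If $A \subseteq B$ then $h_n(A) \le h_n(B)$ by monotonicity, and $t \mapsto (c - t)_+$ is nonincreasing, so the marginal gain of $e$ over $A$ is at least its marginal gain over $B$; this is exactly the diminishing-returns condition, so each $h_n$ — and hence $\hat C_k$ — is submodular.

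With $\hat C_k$ shown to be normalized, monotone, and submodular on the finite ground set $\mathcal{U}$, the Nemhauser--Wolsey--Fisher theorem immediately yields that the set $\mathcal{L}$ obtained by greedily adding the maximal-marginal-gain element for $\lvert \mathcal{L} \rvert$ steps satisfies $\hat C_k(\mathcal{L}) \ge (1 - 1/e)\max_{\mathcal{S} : \lvert \mathcal{S} \rvert \le \lvert \mathcal{L} \rvert} \hat C_k(\mathcal{S})$, which is \eqref{eq:submodular-guarantee}. I do not anticipate a genuine obstacle: the argument is a textbook reduction, and the only points needing care are the empty-set convention (which makes normalization hold and uses $k \ge 0$) and being explicit that the comparison class on the right-hand side ranges over subsets of the same ground set $\mathcal{U}$ that MaxHerding draws from — extending the optimum to arbitrary points in feature space is not what the proposition claims and would require a separate argument.
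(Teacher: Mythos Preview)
Your proof is correct and follows essentially the same approach as the paper: recognize $\hat C_k$ as a facility-location objective (a sum of coordinate-wise maxima of nonnegative values), hence monotone submodular, and then invoke the Nemhauser--Wolsey--Fisher $(1-1/e)$ greedy guarantee. The only difference is that you spell out the submodularity verification via the marginal-gain identity, whereas the paper simply cites the facility-location literature for that fact.
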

\begin{proof}
The generalized coverage
can be written as $g(\mathcal{S}) \defeq \sum_{i=1}^N \max_{j \in \mathcal{S}} M_{i, j}$;
when all $M_{i,j} \ge 0$,
these \emph{facility location} problems are monotone submodular~\cite{plant_location1974frieze,facility_location1990mirchandani}.
The result follows from standard guarantees on the quality of greedy optimization of submodular functions~\cite{submodular_greedy1978nemhauser,submodular2014krause}.
\qed
\end{proof}

\subsubsection{Connection to ProbCover}
This result follows immediately from \cref{thm:gencov-tophat}.
\begin{restatable}{corollary}{probcover}
     MaxHerding with a top-hat function is identical to ProbCover.
    \label{thm:probcover}
\end{restatable}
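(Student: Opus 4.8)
The plan is to show that the greedy step of MaxHerding, when instantiated with the top-hat function $k$, reduces exactly to the greedy step of ProbCover, and then to invoke \cref{thm:gencov-tophat} to match the objectives being optimized. First I would recall that by \cref{thm:gencov-tophat}, for the top-hat $k$ the generalized coverage $C_k(\mathcal L)$ equals ProbCover's coverage $\Pr_{\vecx}(\exists \vecx' \in \mathcal L \text{ s.t. } \lVert \vecx - \vecx' \rVert \le \delta)$, and likewise their Monte Carlo estimators $\hat C_k(\mathcal L)$ coincide with ProbCover's empirical coverage (the fraction of unlabeled points within $\delta$ of some labeled point). So both algorithms are greedily maximizing the same set function; it remains only to check that ``greedily maximize'' means the same thing step-by-step, i.e.\ that ties aside, the argmax in \eqref{eq:max_herding} picks the same point ProbCover would.

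Next I would unpack the marginal-gain expression \eqref{eq:max_herding} for the top-hat kernel. Since $k(\vecx_n,\vecx') \in \{0,1\}$, the quantity $\max_{\vecx' \in \mathcal L} k(\vecx_n,\vecx')$ is the indicator that $\vecx_n$ is already covered by $\mathcal L$ (within radius $\delta$), and $\max_{\vecx' \in \mathcal L \cup \{\tvecx\}} k(\vecx_n,\vecx')$ is the indicator that $\vecx_n$ is covered by $\mathcal L \cup \{\tvecx\}$. Their difference, summed over $n$ and divided by $N$, is precisely $\frac1N$ times the number of currently-uncovered points that fall within distance $\delta$ of $\tvecx$. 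Maximizing this over $\tvecx$ is exactly ProbCover's rule: at each iteration ProbCover adds the point covering the largest number of as-yet-uncovered unlabeled samples. (Here I would be slightly careful about whether $\tvecx$ ranges over the unlabeled pool $\mathcal U$ or all of $\mathbb{R}^d$; ProbCover restricts to $\mathcal U$, and MaxHerding as stated picks from the data as well, so the feasible sets agree — I'd note this explicitly.) Since the per-step choices agree, by induction the labeled sets produced by the two algorithms coincide, and $\hat C_k$ is exactly the ProbCover coverage being greedily maximized, so the algorithms are identical.

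The main obstacle, such as it is, is really just bookkeeping around two points: (i) making the correspondence between the kernel-based marginal gain and ProbCover's ``number of newly covered points'' airtight, which amounts to the elementary identity $\max(\mathbf 1[A],\mathbf 1[B]) = \mathbf 1[A \cup B]$ applied pointwise; and (ii) handling ties consistently, since greedy submodular maximization is only defined up to tie-breaking — I would simply state that the two algorithms agree under any common tie-breaking rule (or note that ProbCover itself does not specify one), so ``identical'' is understood modulo the tie-breaking convention. Neither point is deep; the corollary is essentially a restatement of \cref{thm:gencov-tophat} at the level of the algorithm rather than the objective, and the proof will be a short paragraph.
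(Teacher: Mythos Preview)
Your proposal is correct and takes essentially the same approach as the paper: the paper simply states that the corollary ``follows immediately from \cref{thm:gencov-tophat},'' and your argument is precisely an elaboration of that immediacy, spelling out that identical objectives plus greedy maximization over the same pool yield identical algorithms (modulo tie-breaking). Your added bookkeeping about the marginal gain, the feasible set, and ties is all sound and more explicit than the paper itself.
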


\subsubsection{Connection to kernel herding}
The following result, proved in \cref{app:subsec:maxherding},
relates
MaxHerding
to kernel herding (thus explaining the name).
\begin{restatable}{proposition}{maxherding}
    Let the max kernel function for a labeled set $\mathcal L$
    be given by
    \[
        k(\vecx, \tvecx; \mathcal L)
        :=
        \max\left\{
            k(\vecx, \tvecx)
            - \max_{\vecx' \in \mathcal L} k(\vecx, \vecx')
        , 0 \right\}
    .\]
    Then we can rewrite \eqref{eq:max_herding}, the improvement in one step of MaxHerding, as
    \begin{equation} \label{eq:maxherding-reframed}
        \max_{\tvecx}
        \frac1N \sum_{n=1}^N k(\vecx_n, \tvecx; \mathcal L)
        - \frac{1}{\lvert \mathcal L \rvert + 1} \sum_{l=1}^{\lvert \mathcal L \rvert}
          k(\tvecx_l, \tvecx; \mathcal L)
    .\end{equation}
    \label{thm:maxherding}
\end{restatable}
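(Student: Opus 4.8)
The plan is to peel \eqref{eq:max_herding} apart one sample at a time and then observe that the apparent ``penalty'' in \eqref{eq:maxherding-reframed} vanishes identically. First I would invoke the elementary identity $\max\{a,b\} - b = \max\{a-b,0\}$. Applying it with $a = k(\vecx_n,\tvecx)$ and $b = \max_{\vecx'\in\mathcal L} k(\vecx_n,\vecx')$, together with $\max_{\vecx'\in\mathcal L\cup\{\tvecx\}} k(\vecx_n,\vecx') = \max\bigl\{k(\vecx_n,\tvecx),\, \max_{\vecx'\in\mathcal L} k(\vecx_n,\vecx')\bigr\}$, the per-sample difference inside \eqref{eq:max_herding} becomes exactly $k(\vecx_n,\tvecx;\mathcal L)$. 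Summing over $n$ and dividing by $N$ then shows that \eqref{eq:max_herding} equals $\max_{\tvecx}\,\frac1N\sum_{n=1}^N k(\vecx_n,\tvecx;\mathcal L)$.

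It then remains to show that the subtracted term in \eqref{eq:maxherding-reframed} is zero, i.e. $k(\tvecx_l,\tvecx;\mathcal L) = 0$ for every already-selected $\tvecx_l$ and every candidate $\tvecx$. Since $\tvecx_l \in \mathcal L$, we have $\max_{\vecx'\in\mathcal L} k(\tvecx_l,\vecx') \ge k(\tvecx_l,\tvecx_l)$, and by the monotonicity-in-distance assumption on $k$ from \cref{thm:ourbound} (a point at distance $0$ attains the largest value of $k(\tvecx_l,\cdot)$; this holds for both the Gaussian kernel and the top-hat function) we have $k(\tvecx_l,\tvecx_l) \ge k(\tvecx_l,\tvecx)$. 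Hence $k(\tvecx_l,\tvecx) - \max_{\vecx'\in\mathcal L} k(\tvecx_l,\vecx') \le 0$, so the outer $\max\{\cdot,0\}$ in the definition of $k(\tvecx_l,\tvecx;\mathcal L)$ returns $0$. Therefore the penalty term $\frac{1}{\lvert\mathcal L\rvert+1}\sum_{l=1}^{\lvert\mathcal L\rvert} k(\tvecx_l,\tvecx;\mathcal L)$ is identically $0$, and \eqref{eq:maxherding-reframed} collapses to $\max_{\tvecx}\,\frac1N\sum_{n=1}^N k(\vecx_n,\tvecx;\mathcal L)$, which we just showed equals \eqref{eq:max_herding}.

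There is essentially no hard obstacle; the only step requiring a moment of care is the vanishing of the penalty, which is precisely the statement that already-selected points are ``perfectly covered by themselves.'' I would flag this as the conceptual content of the proposition: unlike genuine kernel herding, where the penalty of \eqref{eq:kherding_approx} is nontrivially active, MaxHerding's penalty is always $0$, so MaxHerding is the herding-style update applied to the residual ``max kernel'' $k(\cdot,\cdot;\mathcal L)$ but stripped of any repulsion among chosen points. I would close by noting that this also explains why the $\frac{1}{\lvert\mathcal L\rvert+1}$ normalization is immaterial for MaxHerding, in contrast to its role in \cref{thm:kernel-herding}.
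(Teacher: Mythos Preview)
Your proposal is correct and essentially mirrors the paper's proof: both arguments show that the penalty term vanishes because $k(\tvecx_l,\tvecx_l)\ge k(\tvecx_l,\tvecx)$ for $\tvecx_l\in\mathcal L$, and then use the identity $\max\{a,b\}-b=\max\{a-b,0\}$ to rewrite the reward term as \eqref{eq:max_herding}. The only cosmetic difference is the order of the two steps and your explicit invocation of the monotonicity-in-distance assumption from \cref{thm:ourbound}, which the paper leaves implicit.
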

Comparing to \eqref{eq:kherding_approx} shows that
MaxHerding
takes the same form as kernel herding
using a \emph{max kernel} which only counts the ``closest'' contribution,
rather than allowing multiple contributions from nearby points to ``add up.''

Kernel herding tries to match a target distribution,
and hence if many points are clustered in a small region,
it will select several points from a small area to match the probability density.
If we assume that the deterministic true labeling function $f$ is relatively smooth,
this effort is simply wasted in active learning:
we already know the local label from the first point in that area,
and our effort is probably better spent obtaining labels from elsewhere.

\begin{figure*}[t!]
    \centering
    \begin{subfigure}[b]{0.39\textwidth} %
        \includegraphics[width=\textwidth]{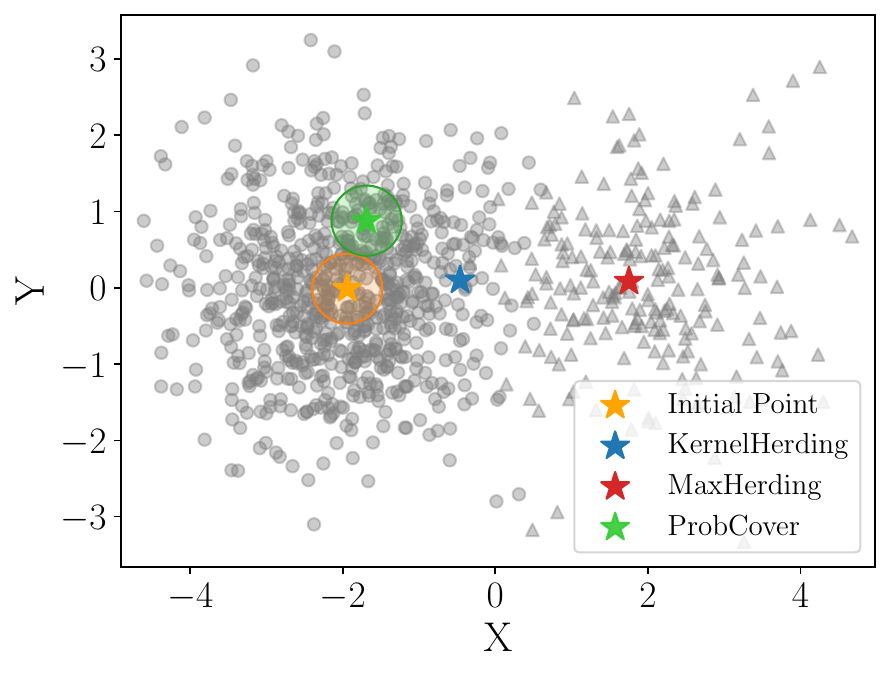}
        \caption{Comparison for the next choices}
        \label{fig:toy}
    \end{subfigure}
    \hfill
    \begin{subfigure}[b]{0.59\textwidth} %
        \includegraphics[width=\textwidth]{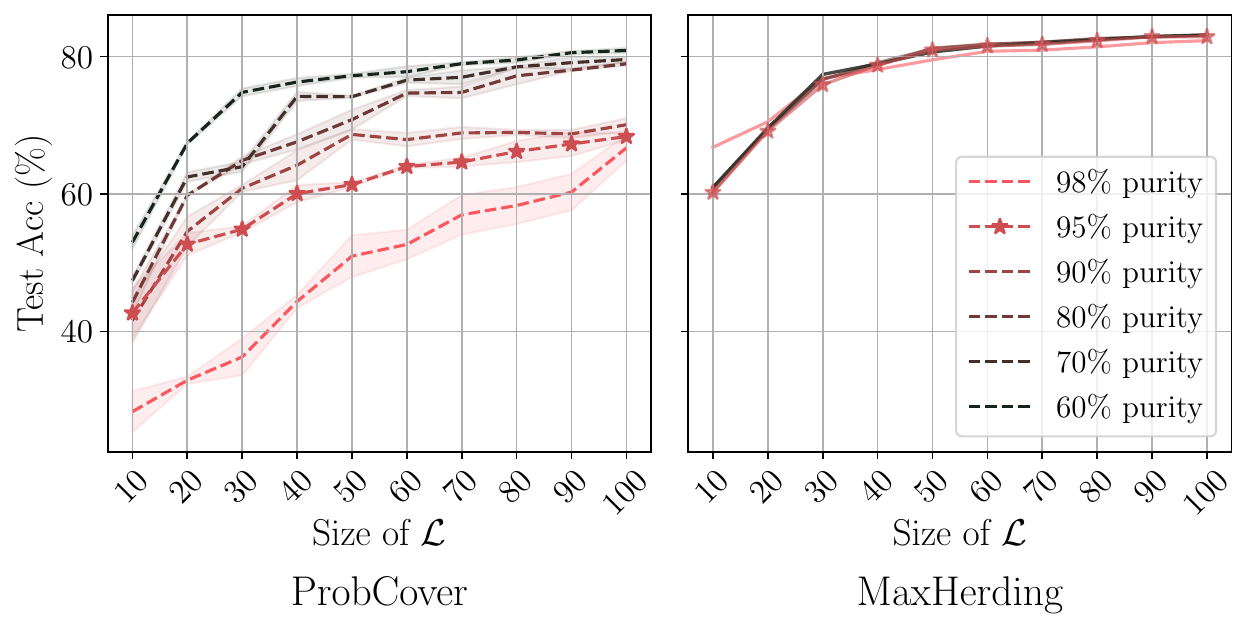}
        \caption{Sensitivity to $\delta$ (derived from purity)}
        \label{fig:sensitivity}
    \end{subfigure}
    \caption{(a) Next choices for different algorithms after selecting the initial orange point. (b) Varying the radius $\delta$ in ProbCover, and Gaussian lengthscale for MaxHerding, for CIFAR10 classification. See text for the definition of purity.}
    \label{fig:toy_and_sensitivity}
\end{figure*}

For an illustration, see \cref{fig:toy}, where 80\% of data points (marked as $\circ$) come from one Gaussian component,
and the remaining 20\% ($\vartriangle$) from another.
Kernel herding, MaxHerding, and ProbCover will each choose roughly the mean of the larger component as their first point (orange).
Freezing that selection, the figure shows the second point selected by each algorithm:
MaxHerding chooses roughly the mean of the second component,
and both ProbCover and kernel herding choose points within the first larger component.
ProbCover chooses a point just outside the radius of its initial selection;
kernel herding chooses a point far enough to have a small penalty,
but close enough to still exploit a large amount of reward from ``double-counting'' the larger cluster.
MaxHerding also shows improved behavior compared to these algorithms in our experiments.

The max kernel is not a positive definite kernel function (for one, it is not symmetric),
and so bounds for kernel herding such as \cref{thm:kernel-herding} do not automatically apply to MaxHerding.
It is worth noting, however, that \cref{thm:kernel-herding} is not actually particularly applicable to active learning settings either.

\begin{restatable}{brem}{kherding}
    Let $L_{\hat f}(\vecx) \defeq \E_y [\ell(\hat{f}(\vecx), y) \mid \vecx]$
    denote the expected loss on $\vecx$ for a predictor $\hat f$ at $\vecx$.
    (If $y$ is a deterministic function $f$ of $\vecx$ and $\ell$ is the $0$-$1$ loss corresponding to accuracy, $L_{\hat f}(\vecx)$ is $0$ if $\hat f(\vecx) = f(\vecx)$ and $1$ otherwise.)
    
    We wish to find the predictor $\hat f$ with minimal expected loss $L_{\hat f} = \E_{\vecx} L_{\hat f}(\vecx)$.
    \Cref{thm:kernel-herding} establishes that if $\mathcal L$ is chosen with kernel herding in a RKHS $\mathcal H$,
    our estimated loss $\hat L_{\hat f} = \frac1{\lvert \mathcal L \rvert} \sum_{l=1}^{\lvert \mathcal L \rvert} L_{\hat f}(\tvecx_l)$
    converges to the true value $L_{\hat f}$ at a rate of $\mathcal O\left( \lVert L_{\hat f} \rVert_{\mathcal H} / L \right)$,
    and thus we can expect based on uniform convergence
    that learning based on $\mathcal L$ will quickly identify the best predictor.

    For classification tasks with the $0$-$1$ loss and a continuous kernel, however,
    $\lVert L_{\hat f} \rVert_{\mathcal H} = \infty$
    \cite[Lemma 4.28]{steinwart-christmann}.
    Thus, the upper bound is infinite.
    \label{thm:kherding}
\end{restatable}

\subsubsection{Efficient implementation}
Using the expressions from \cref{thm:maxherding},
we can use dynamic programming to select each point in $\mathcal O(N)$ time,
rather than $\mathcal O( N \lvert \mathcal L \rvert )$ as for a direct implementation of \eqref{eq:max_herding}.
\Cref{alg:our_method} gives pseudocode.

\begin{algorithm}[t!]
\newlength{\commentWidth}
\setlength{\commentWidth}{7cm}
\newcommand{\atcp}[1]{\tcp*[r]{\makebox[\commentWidth]{#1\hfill}}}
\SetKwInput{KwInput}{Input}
\caption{Greedy and non-greedy methods for generalized coverage}\label{alg:our_method}
\KwInput{Labeled set $\mathcal{L}$, unlabeled set $\mathcal{U}$, budget $B$, number of iterations $T$ }
Compute $\mathbf{k} \in \mathbbm{R}^{\lvert \mathcal{U} \rvert}$ where $\mathbf{k}_i = \max_{\vecx' \in \mathcal{L}} k(\vecx_i, \vecx')$ \\
\For{$t \in [1, 2, \cdots, T]$}{
     \eIf(\tcp*[h]{\textcolor{blue}{Greedy method}}){ \textnormal{Use Greedy} }{
    \For{$b \in [1, 2, \cdots, B]$}{ 
        Select $\vecx_b^* = \argmax_{\tvecx \in \mathcal{U}} \frac{1}{N} \sum_{n=1}^N \max(k(\vecx_i, \tvecx) - \mathbf{k}_i, 0)$\\
        Update $\mathbf{k}_i \leftarrow \max (k(\vecx_i, \vecx_b^*), \mathbf{k}_i ) $
    }
    }(\tcp*[h]{\textcolor{blue}{Non-greedy method}}){
    Fit kernel $k$-medoids with $\lvert \mathcal{L} \rvert$ fixed and $B$ random initial centroids \\
    Select new $B$ centroids as $\{\vecx_1^*, \vecx_2^*, \cdots, \vecx_B^* \}$ \\
    }
    \tcp{\textcolor{blue}{For both greedy and non-greedy methods}}
    Update $\mathcal{L} \leftarrow \mathcal{L} \cup \{\vecx_b^*\}_{b=1}^B$ and $\mathcal{U} \leftarrow \mathcal{U} \,\backslash \, \{\vecx_b^*\}_{b=1}^B$ 
}
\end{algorithm}
\vspace{-3mm}

\subsubsection{Sensitivity to radius $\delta$} 
We argue that the top-hat function used by ProbCover is not a good choice for measuring similarity between data points.
Its notion of smoothness for data points is ``all or nothing,''
treating two points as exactly equivalent at distance $0.99\delta$ and utterly unconnected at $1.01 \delta$.
Thus, it is perhaps unsurprising that ProbCover is highly sensitive to the choice of $\delta$.

\Cref{fig:sensitivity}
shows downstream active learning performance on CIFAR10 for different values of $\delta$ (details in \cref{sbusec:exp:setup}).
Following Yehuda~\etal~\cite{probcover2022yehuda},
we select $\delta$ based on an estimate of its purity:
95\% purity means that the second term in the upper bound of
\eqref{eq:probcover_bound} is $0.05$,
as estimated on samples with $k$-means labels.
(The purity is a monotonically decreasing function of $\delta$.)

We can see that active learning performance for ProbCover is highly sensitive to the choice of $\delta$;
in this case, larger $\delta$ performs better, but this varies by dataset and features.
By contrast, varying the lengthscale parameter of a Gaussian kernel to set the purity term in \eqref{eq:ourbound} barely changes downstream performance here.
We thus stick to a default value of $1$, a sensible number for normalized features.
We also show that MaxHerding is robust to the choice of kernels in \cref{fig:kernels}.

\subsection{Non-Greedy Maximization: Kernel $k$-Medoids}
\label{subsec:method:non_greedy}

Given already-selected points $\mathcal L$,
the problem of choosing a new batch $\mathcal S$ to label
in order to maximize the generalized coverage $C_k$ of \eqref{eq:gen-coverage}
can be written as
\begin{align}
    \argmax_{\mathcal{S} \subset \mathcal{X}, \lvert \mathcal{S} \rvert = B }
    & \frac{1}{N} \sum_{n=1}^N \max_{x' \in \mathcal{L} \cup \mathcal{S}} k(\mathbf{x}_n, \mathbf{x}') \label{eq:non_greedy}
.\end{align}
\begin{proposition} \label{thm:k-medoids}
Let $k$ be a positive definite kernel
with feature map $\phi$, so that $k(\vecx, \vecx') = \langle \phi(\vecx), \phi(\vecx') \rangle_{\mathcal H}$
for some Hilbert space $\mathcal H$.
Suppose $k(\vecx, \vecx)$ is constant
(as holds, \eg for Gaussian kernels).
Then the maximizer of \eqref{eq:non_greedy} exactly agrees with the objective of $k$-means or $k$-medoids,
with cluster centers in $\mathcal L \cup \mathcal S$:
\begin{align}
    \eqref{eq:non_greedy}
    =
    \argmax_{\mathcal{S} \subset \mathcal{X}, \lvert \mathcal{S} \rvert = B } \frac{1}{N} \sum_{n=1}^N \min_{x' \in \mathcal{L} \cup \mathcal{S}} \lVert \phi(\vecx_n) - \phi(\vecx')\rVert_\mathcal{H}^2
    \label{eq:kmeans}
.\end{align}
\end{proposition}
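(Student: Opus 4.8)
The plan is to collapse the claim to one elementary identity relating squared distances in $\mathcal H$ to kernel values, and then note that the two objectives differ only by an affine, order-reversing transformation that does not involve $\mathcal S$. First I would expand the RKHS distance, $\lVert \phi(\vecx_n) - \phi(\vecx') \rVert_{\mathcal H}^2 = k(\vecx_n,\vecx_n) - 2\,k(\vecx_n,\vecx') + k(\vecx',\vecx')$, and invoke the constant-diagonal hypothesis, writing $k(\vecx,\vecx) \equiv c$; then the right-hand side becomes $2c - 2\,k(\vecx_n,\vecx')$, which — this is the whole point of the assumption — contains no center-dependent term other than $k(\vecx_n,\vecx')$ itself. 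Since $t \mapsto 2c - 2t$ is strictly decreasing, for every $n$ we get $\min_{\vecx' \in \mathcal L \cup \mathcal S} \lVert \phi(\vecx_n) - \phi(\vecx') \rVert_{\mathcal H}^2 = 2c - 2 \max_{\vecx' \in \mathcal L \cup \mathcal S} k(\vecx_n, \vecx')$, and the inner minimizer coincides with the inner maximizer.

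Next I would average over $n = 1,\dots,N$: the within-cluster distortion $\frac1N \sum_{n=1}^N \min_{\vecx' \in \mathcal L \cup \mathcal S} \lVert \phi(\vecx_n) - \phi(\vecx') \rVert_{\mathcal H}^2$ equals $2c - 2\,\hat C_k(\mathcal L \cup \mathcal S)$, i.e.\ the empirical generalized coverage of \eqref{eq:gen-coverage} shifted and scaled by constants that do not depend on $\mathcal S$. Over the feasible set $\{ \mathcal S \subset \mathcal X : \lvert \mathcal S \rvert = B \}$, maximizing \eqref{eq:non_greedy} is therefore the same optimization problem as minimizing the distortion in \eqref{eq:kmeans}, and the two have exactly the same optimizers (the coverage optimum is attained at the $\mathcal S$ that minimizes the distortion objective).

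Finally I would identify this distortion objective with $k$-medoids and kernel $k$-means. Once the admissible cluster centers are restricted to the finite candidate set $\mathcal L \cup \mathcal S$ — with the $\lvert \mathcal L \rvert$ centers from $\mathcal L$ held fixed and the $B$ new centers forming $\mathcal S$, exactly as in the non-greedy branch of \cref{alg:our_method} — the $k$-means and $k$-medoids objectives coincide: both reduce to assigning each $\vecx_n$ to its nearest admissible center and summing the squared RKHS distances, $d(\vecx,\vecx') = \lVert \phi(\vecx) - \phi(\vecx') \rVert_{\mathcal H}^2$ (the Lloyd-type mean update cannot leave $\mathcal L\cup\mathcal S$, so the centroid step degenerates to a medoid step). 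Combining with the previous paragraph, maximizing the generalized coverage is, up to the additive constant $2c$, precisely the kernel $k$-medoids / $k$-means objective with cluster centers constrained to $\mathcal L \cup \mathcal S$.

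There is no real computational obstacle here — the whole argument is the one-line identity $\min \lVert \phi(\vecx_n) - \phi(\vecx')\rVert_{\mathcal H}^2 = 2c - 2\max k(\vecx_n,\vecx')$ — so the care points are conceptual rather than technical: (i) the constant-diagonal assumption is load-bearing, since without it the residual $k(\vecx',\vecx')$ depends on which center is chosen and $\min$ of the distance no longer tracks $\max$ of the kernel; and (ii) one must keep the feasible sets honest, namely that $\mathcal S$ ranges only over size-$B$ subsets of the candidate pool so that $\mathcal L \cup \mathcal S$ is a legitimate set of medoids, with $\mathcal L$ acting as pre-fixed, frozen centers.
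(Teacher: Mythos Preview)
Your proposal is correct and follows exactly the paper's approach: the paper's entire proof is the single identity $\lVert \phi(\vecx_n) - \phi(\vecx')\rVert_\mathcal{H}^2 = k(\vecx_n, \vecx_n) + k(\vecx', \vecx') - 2 k(\vecx_n, \vecx')$, with everything else left implicit. You have simply spelled out the consequences (constant diagonal $\Rightarrow$ $2c - 2k(\vecx_n,\vecx')$, monotone bijection swaps $\min$ and $\max$, averaging preserves the argopt) that the paper elides.
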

\begin{proof}
    We have that 
    $\lVert \phi(\vecx_n) - \phi(\vecx')\rVert_\mathcal{H}^2
    = k(\vecx_n, \vecx_n) + k(\vecx', \vecx') - 2 k(\vecx_n, \vecx')$.
    \qed
\end{proof}
Using a linear kernel\footnote{If $\lVert \vecx \rVert = 1$, then the linear kernel satisfies our assumption of $k(\vecx, \vecx)$ being constant. $k$-means would not select points satisfying this assumption, but $k$-medoids would.} $k(\vecx, \vecx') = \vecx \cdot \vecx'$, we recover standard $k$-means.

In pool-based active learning, we cannot query the labels of arbitrary points;
we thus cannot simply query at the cluster centers obtained by $k$-means.
(Using kernel $k$-means and having $\vecx$ the output space of a featurizer network both also mean that arbitrary points in $\mathcal H$ may not correspond to \emph{any} possible input image.)
Typiclust~\cite{typiclust2022hacohen} thus runs $k$-means
(capping the number of clusters at $500$ to try to avoid empty clusters)
and chooses the ``densest'' point in each cluster to label.
Doing so, however, might not choose a ``central'' point in that cluster.
Given our generalized coverage framing,
we can instead directly enforce $\mathcal L \cup \mathcal S \subseteq \mathcal U$
and optimize the objective \eqref{eq:kmeans} with $k$-medoids,
as described in \cref{alg:our_method}.

The most popular approach for high-quality solutions to $k$-medoids problems is the
\textit{Partitioning Around Medoids} (PAM)~\cite{pam2009kaufman} algorithm.
Although we found implementations of ``plain'' PAM impractically slow, 
the FasterPAM algorithm of \cite{fastPAM2019schubert,fasterPAM2021schubert,fasterPAM2022schubert}
is more efficient.
We modified the fast Rust~\cite{rust2014matsakis} implementation
from Schubert and Lenssen~\cite{fasterPAM2022schubert}
to support ``freezing'' $\mathcal L$ while optimizing $\mathcal S$ in \eqref{eq:kmeans}.

In our experiments, kernel $k$-medoids achieves similar or slightly better downstream performance compared to MaxHerding, as shown in \cref{fig:greedy_vs_nongreedy}.
(We know from \cref{thm:submodular-guarantee} that the amount of possible improvement in $C_k$ is limited.)
Even this optimized implementation is much slower than other methods, however (see \cref{subsec:exp:ablation}).
Hence, we recommend practitioners default to MaxHerding over kernel $k$-medoids.

\subsection{Relationships with Existing Methods}
\label{subsec:method:relationship}

\Cref{fig:overview} describes connections between our approaches based on generalized coverage, $C_k(\mathcal L)$, to several existing active learning and herding algorithms.
(\textcolor{red}{a}, \cref{thm:maxherding}) MaxHerding is kernel herding \cite{kherding2010chen} with a ``max kernel function,''
also establishing (\textcolor{red}{d}) a connection to Stein Points~\cite{steinpoints2018chen}.
(\textcolor{red}{b}, \cref{thm:probcover}) ProbCover~\cite{probcover2022yehuda} is a special case of MaxHerding with a top-hat kernel function,
also establishing (\textcolor{red}{e}) a close connection to Coreset~\cite{coreset2017sener}.
(\textcolor{red}{c}, \cref{eq:kmeans}) Maximizing generalized coverage is equivalent to kernel $k$-medoids,
connecting (\textcolor{red}{f}) to Typiclust~\cite{typiclust2022hacohen} through $k$-means clustering.

\section{Experimental Evaluation}
\label{sec:experiments}
\label{sbusec:exp:setup}

\subsubsection{Active learning methods} We compare our proposed MaxHerding method to the following baselines.
Uncertainty, Entropy, and Margin are representative uncertainty-based methods.
Coreset, Typiclust, and ProbCover are the leading representation-based methods,
while KernelHerding was selected due to its connection to our algorithm.

\begin{description}[itemsep=2pt,parsep=2pt,leftmargin=!,labelindent=1em,labelwidth=5em,itemindent=4em,labelsep=1ex,]
    \item[Random] Uniformly select $B$ data points at random. 
    \item[Uncertainty] Select $\vecx^* = \argmin_{\tvecx \in \mathcal{U}} p_1(y|\tvecx)$, one point at a time, where $p_1$ denotes the highest predicted probability.
    \item[Entropy~\cite{entropy2014wang}]
    Select $\vecx^* = \argmax_{\tvecx \in \mathcal{U}} H(\hat y(\tvecx) \mid \tvecx)$, one point at a time, where $H(\cdot)$ is the Shannon entropy.
    \item[Margin~\cite{margin2001scheffer}] Select $x^* = \argmin_{\tvecx \in \mathcal{U}} p_1(y|\tvecx) - p_2(y|\tvecx)$, one point at a time, where $p_2$ denotes the second-highest predicted probability.
    
    \item[Coreset~\cite{coreset2017sener}] Select one point at a time using the $k$-Center-Greedy algorithm, defined as $\vecx^* = \argmax_{\tvecx\in\mathcal{U}} \min_{\vecx' \in \mathcal{L}} {\lVert \tvecx - \vecx' \rVert}_2$.

    \item[BADGE~\cite{badge2019ash}] Select one point at a time by the $k$-means$^{++}$ initialization algorithm, based on gradient embeddings w.r.t.\ the weights of the last layer.
    \item[Typiclust~\cite{typiclust2022hacohen}]  Run a clustering algorithm, \eg $k$-means. For each cluster, select a point with the highest ``typicality'' using $m$-Nearest Neighbors:
    $\mathrm{typicality}(\vecx) \defeq {\left(\frac{1}{m} \sum_{\vecx' \in m\text{-NN}(\vecx)} {\lVert \vecx - \vecx' \rVert}_2 \right)}^{-1}.$
    \item[ProbCover~\cite{probcover2022yehuda}] Construct a graph where nodes represent data points in $\mathcal{U}$ and there is an edge between $\vecx$ and $\vecx'$ if $\lVert \vecx - \vecx' \rVert \le \delta$. $\delta$ is pre-defined based on heuristics proposed by the authors (refer to \cref{app:sec:delta} for more details).
    Select points one at a time with the highest number of edges.
    \item[KernelHerding~\cite{kherding2010chen}] Select one point at a time based on \eqref{eq:kherding_approx}.
\end{description}

\subsubsection{Evaluation metric}
Following \cite{typiclust2022hacohen,probcover2022yehuda}, we select $C$ data points for annotation at each iteration, where $C$ is the number of classes.
We report the mean and standard deviation of test accuracies for $5$ runs at each of $10$ iterations, except for ImageNet where we conduct $3$ runs at each of $5$ iterations.

\subsubsection{Implementation}
We consider classification both with a 1-NN classifier,
where $\hat f(\vecx)$ is the label of the closest point to $\vecx$ in $\mathcal L$,
and with a linear classifier trained on $\mathcal L$ with cross-entropy loss (multiclass logistic regression).
All results using a linear classifier are presented in \cref{app:sec:exp_linear}.\footnote{BADGE~\cite{badge2019ash} is presented only for linear classifiers, since the gradient embeddings are not defined for 1-NN classifiers.}

Our implementation for active learning is based on \href{https://github.com/avihu111/TypiClust}{the code provided by the authors} of \cite{typiclust2022hacohen,probcover2022yehuda}.
Since the code for the heuristic choice of $\delta$ is not provided,
we implement it as described in \cref{app:sec:delta}.
For self-supervised learning features, we employ SimCLR~\cite{simclr2020chen} and SCAN~\cite{scan2020van} using \href{https://github.com/wvangansbeke/Unsupervised-Classification}{the code} provided by the authors of \cite{scan2020van}, and DINO\cite{dino2021caron} using \href{https://github.com/facebookresearch/dino}{its public implementation}.
We use a Gaussian kernel for both MaxHerding and kernel $k$-medoids, with lengthscale fixed to $1$ throughout the experiments, but we also provide a comparison of kernel choices in \cref{fig:kernels}.

\subsection{Comparison on Benchmarks with Prior Methods}
\label{subsec:exp:sota}

We compare MaxHerding to the other active learning methods on benchmark datasets: CIFAR10~\cite{cifar10krizhevsky}, CIFAR100~\cite{cifar100Krizhevsky}, TinyImageNet~\cite{tinyimagenet} and ImageNet~\cite{imagenet2009deng}.
We extract features using DINO for ImageNet, and SimCLR for the rest.

\cref{fig:sota_1nn} shows that uncertainty-based methods are even worse than random selection, as previously reported~\cite{typiclust2022hacohen,probcover2022yehuda}.\footnote{In high-budget settings, uncertainty methods outperform other methods; see \cref{fig:budget}.}
Kernel herding is significantly better than uncertainty-based methods, but worse than Typiclust and MaxHerding.
Although ProbCover performs the best on TinyImageNet, it is significantly worse than MaxHerding and Typiclust on the CIFAR datasets.
This demonstrates the instability of the heuristics for choosing its $\delta$ radius parameter.

MaxHerding consistently outperforms other methods including Typiclust, the most competitive method.
MaxHerding especially outperforms Typiclust on ImageNet;
this is likely because of the hardcoded $500$-cluster limit of Typiclust, smaller than the number of ImageNet classes.
Increasing that limit without further attention, however, can cause issues as described in \cref{subsec:method:non_greedy}.

\begin{figure*}[t!]
    \centering
    \includegraphics[width=\linewidth]{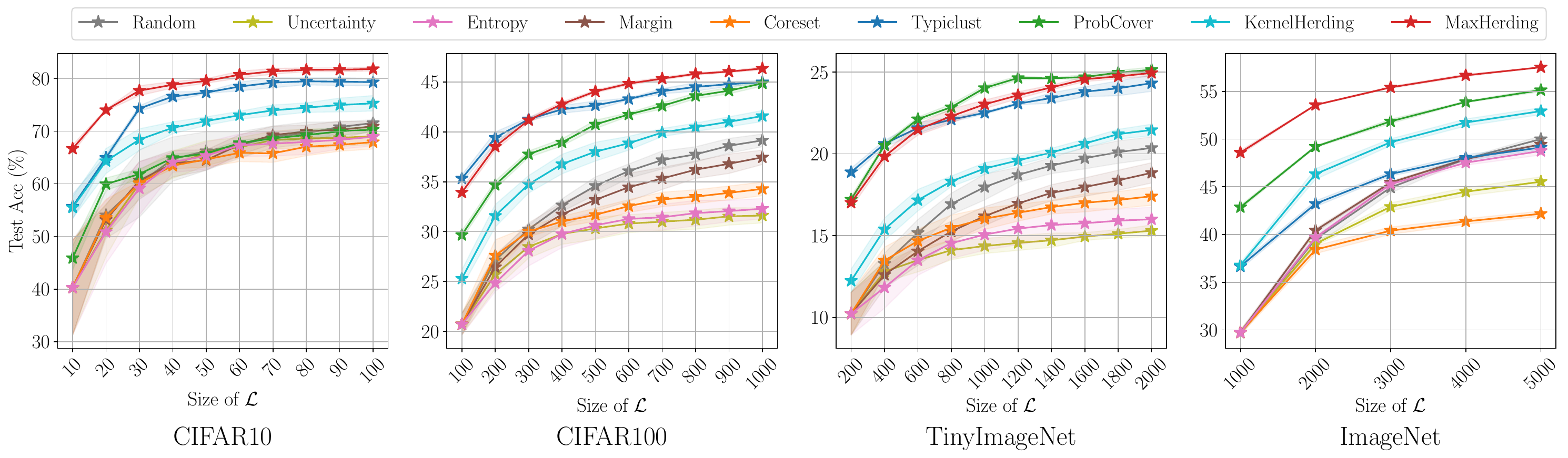}
    \caption{
    Comparison on benchmark datasets using 1-NN classifier.
    }
    \label{fig:sota_1nn}
\end{figure*}

\begin{figure*}[t!]
    \centering
    \includegraphics[width=\linewidth]{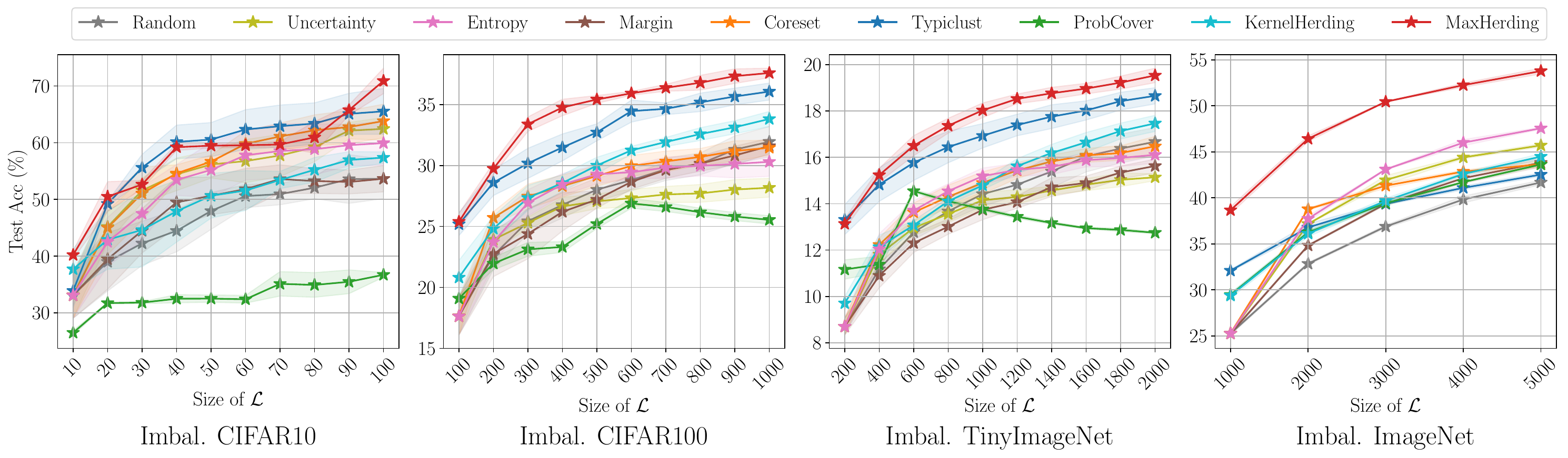}
    \caption{
    Comparison on imbalanced datasets using 1-NN classifier.  
    }
    \label{fig:imbal_1nn}
    \vspace{-3mm}
\end{figure*}

\subsection{Comparison on Imbalanced Datasets}
\label{subsec:exp:imbal}
Although datasets with imbalances in class frequencies are common in practical,
previous works have focused on benchmark datasets with balanced classes.
We generate imbalanced datasets using the benchmark datasets with a standard long-tailed imbalance generation method \cite{imbal2019cui}; details are in \cref{app:sec:exp_linear}.

Low budget active learning on imbalanced datasets is significantly harder than balanced datasets, because selecting data points from the densest region may result in ignoring some classes with a small number of samples.
In extreme cases, test accuracy may even decrease with more labels; this occurs with ProbCover on Imbalanced CIFAR100 and TinyImageNet in \cref{fig:imbal_1nn}.

In this setting, Coreset generally performs better than random selection -- not the case on the balanced datasets.
More importantly, the margin between MaxHerding and Typiclust is larger than on the balanced datasets.
We conjecture that this may be related to the different notions of ``coverage'' used by the two algorithms, with Typiclust more drawn to choose additional points in dense areas (due to its typicality scores) than MaxHerding based on Gaussian kernels.

\subsection{Ablation Study}
\label{subsec:exp:ablation}

\subsubsection{Greedy \textit{vs.} non-greedy} 
Due to the high computational overhead of the non-greedy algorithm 
(as we will see in \cref{fig:runtime}), we did not provide its performance in previous results.
In \cref{fig:greedy_vs_nongreedy}, we compare the greedy (MaxHerding) and non-greedy (kernel $k$-medoids) for the generalized coverage (left) and test accuracy (right) on CIFAR100. 
The greedy and non-greedy algorithms are not significantly different in either coverage or test accuracy, except for the first few iterations.

\subsubsection{Comparison in runtime} 
We compare the runtime of low-budget active learning methods on CIFAR100 and TinyImageNet in \cref{fig:runtime}.
Coreset and ProbCover are extremely fast; Coreset only needs to find the closest labeled points to each candidate $\tvecx$, and ProbCover usually maintains quite a sparse graph with a reasonably large $\delta$.
However, their downstream performance is significantly worse than MaxHerding and Typiclust,
as shown in \cref{subsec:exp:sota,subsec:exp:imbal}.

Typiclust, which usually gives active learning performance comparable to (but worse than) that of MaxHerding, generally has runtime a factor of three or four longer.
Our proposed kernel $k$-medoids generally matches or slightly outperforms MaxHerding, but with more than ten times as much runtime. %

\begin{figure*}[t!]
    \centering
    \begin{subfigure}[b]{0.48\textwidth} %
        \includegraphics[width=\textwidth]{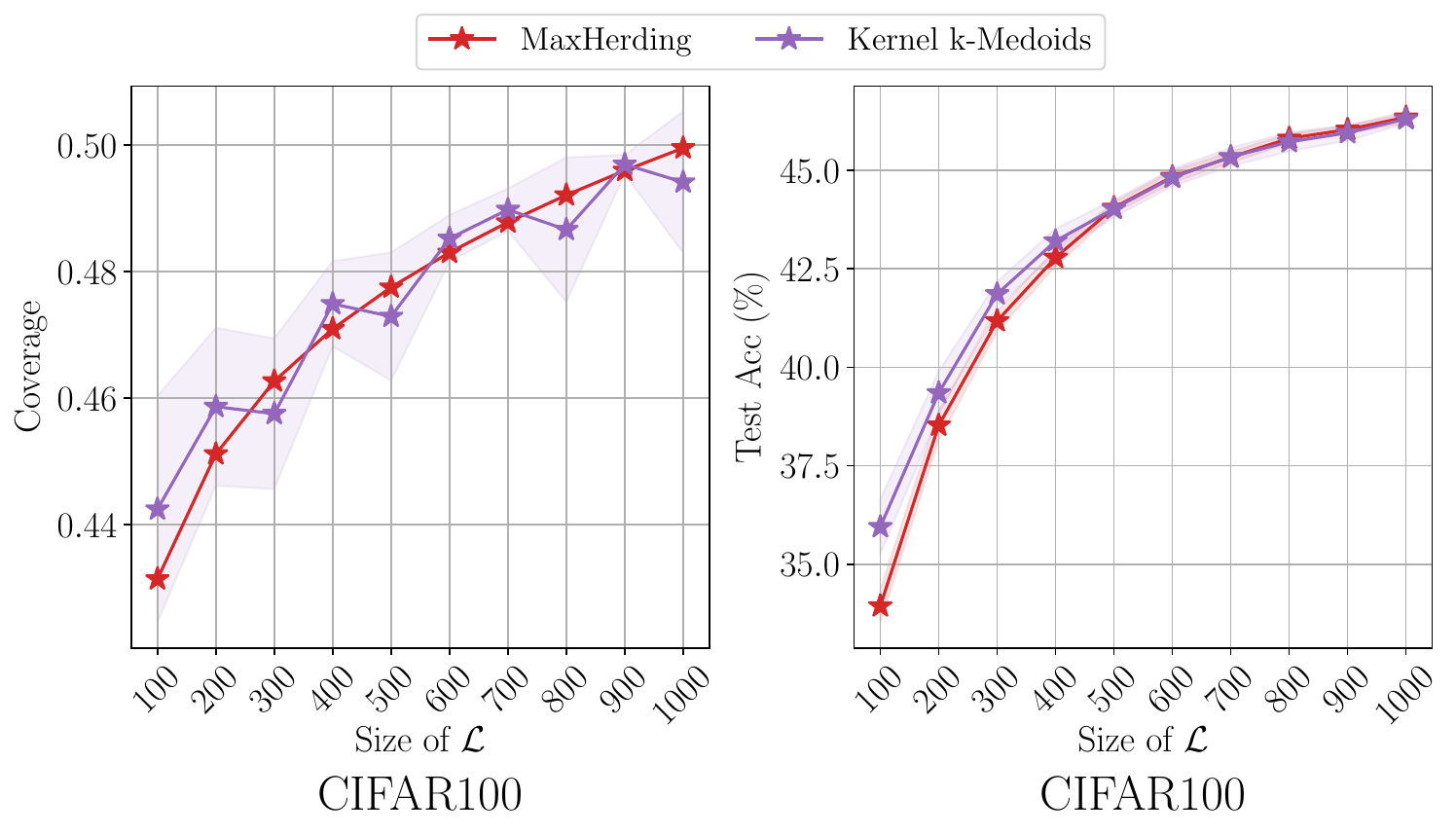}
        \caption{Greedy \textit{vs.} Non-greedy}
        \label{fig:greedy_vs_nongreedy}
    \end{subfigure}
    \hfill
    \begin{subfigure}[b]{0.48\textwidth} %
        \includegraphics[width=\textwidth]{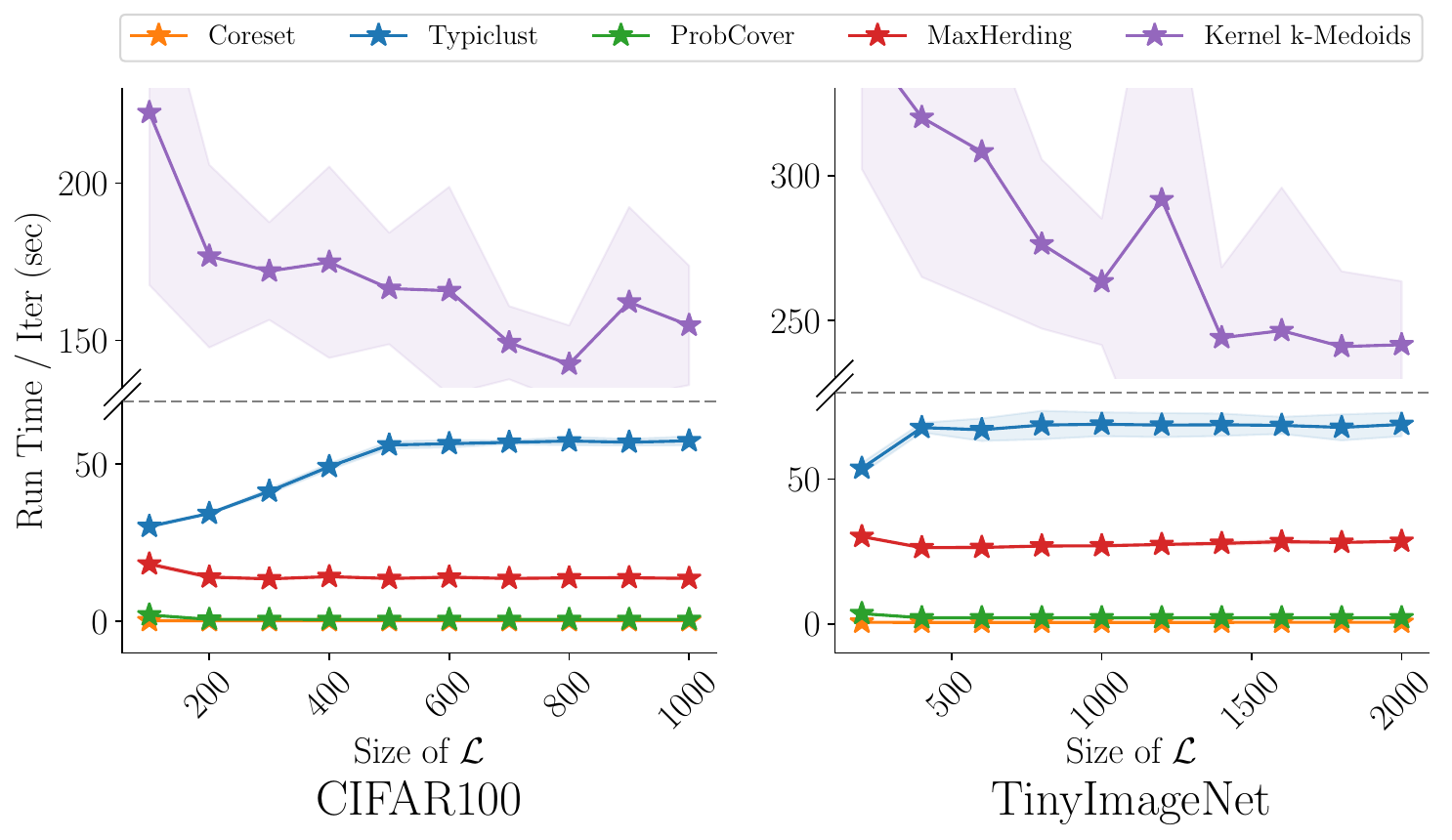}
        \caption{Runtime per selection -- note the axis break}
        \label{fig:runtime}
    \end{subfigure}
    \caption{(a) compares the greedy (MaxHerding) and non-greedy (kernel $k$-medoids) algorithms on CIFAR100, and (b) compares low budget active learning methods in terms of runtime in seconds per selection on CIFAR100 and TinyImageNet.}
\end{figure*}

\subsubsection{Mid- and high-budget regimes} 
Although our main focus is on the low-budget regime, in practice, these regimes are not precisely defined and hard to know in advance.%
\footnote{SelectAL~\cite{select_al2024hacohen} introduced an algorithm to identify the regime, but it requires multiple model re-trainings for cross-validation.}
In \cref{fig:budget}, we compare methods in the mid-budget regime ($100$ samples per iteration) and high-budget regime ($1{,}000$ samples per iteration) on CIFAR10. 
In the mid-budget regime, MaxHerding still performs the best, although Margin catches up at the end.
As expected, in the high-budget regime, Margin and Entropy outperform other methods; MaxHerding still significantly outperforms Typiclust and ProbCover, which shows the robustness of MaxHerding compared to other low-budget active learning methods.

\subsubsection{Robustness to feature embedding} 
We also compare methods using more recent self-supervised learning features: SCAN~\cite{scan2020van} and DINO~\cite{dino2021caron} on CIFAR100.
We obtain SCAN features using a ResNet18 trained on CIFAR100, while for DINO features we use a ViT model pre-trained on ImageNet~\cite{imagenet2009deng}.
As the ViT model is not directly trained on CIFAR100, the performance using the DINO features is generally lower; we still use DINO features because we think this setting reflects a highly practical use of active learning, where time or resources are insufficient to train a self-supervised learning model from scratch.

With these features, we observe similar patterns as we did with previous experiments; uncertainty-based methods are worse than random selection, and MaxHerding generally outperforms the other methods.

\begin{figure*}[t!]
    \centering
    \begin{subfigure}[b]{0.49\textwidth} %
        \includegraphics[width=\textwidth]{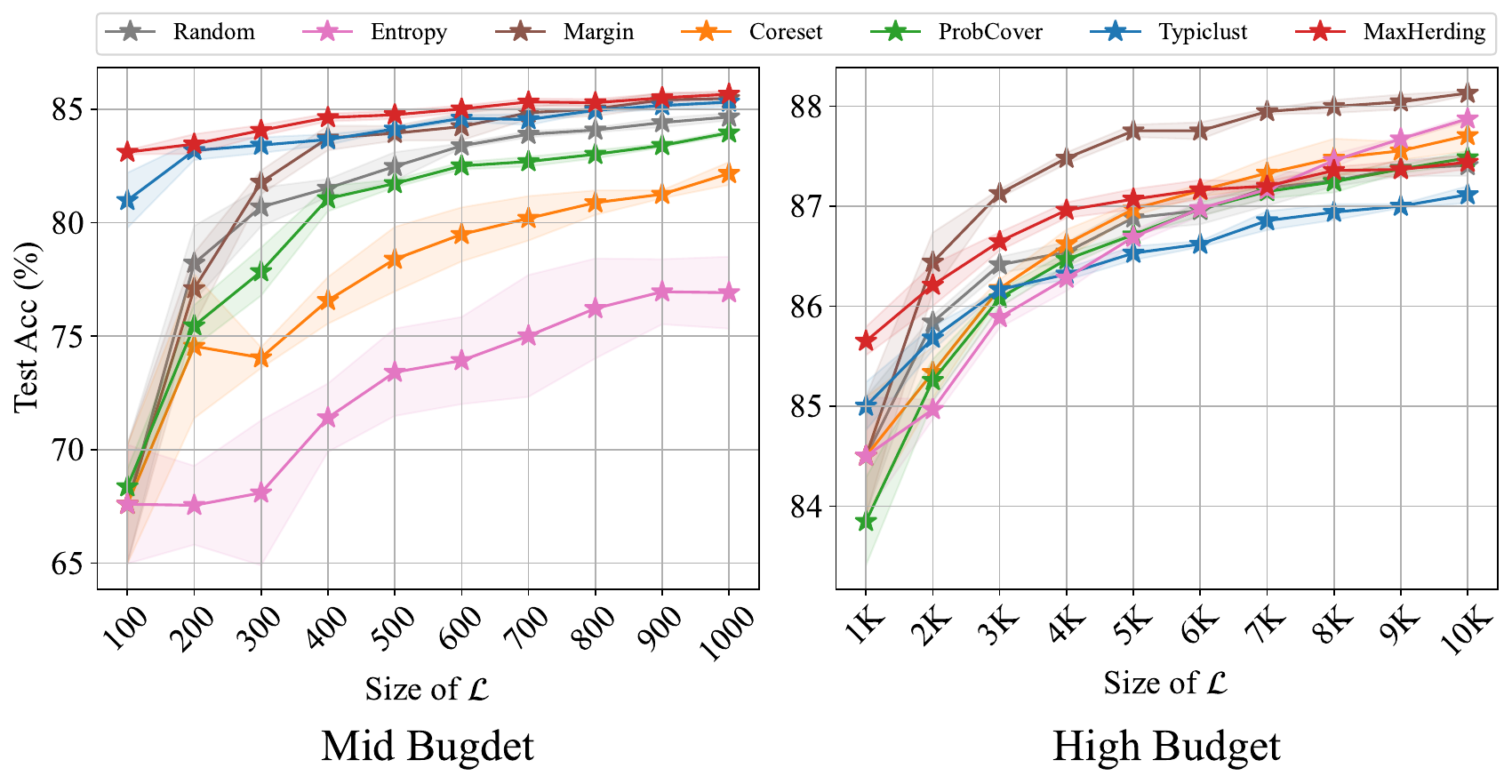}
        \caption{Mid and high budget regime}
        \label{fig:budget}
    \end{subfigure}
    \hfill
    \begin{subfigure}[b]{0.49\textwidth} %
        \includegraphics[width=\textwidth]{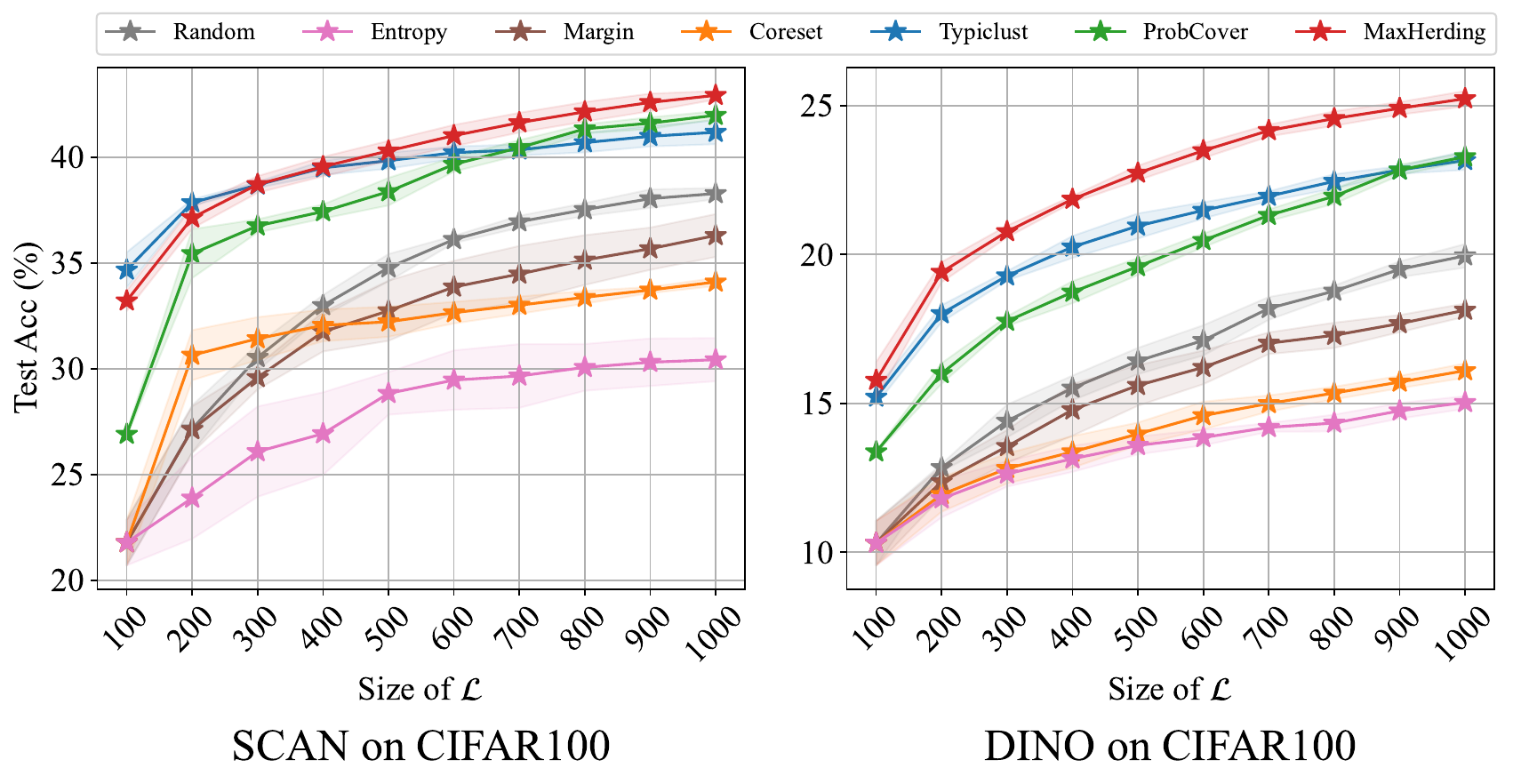}
        \caption{SCAN and DINO features}
        \label{fig:other_features}
    \end{subfigure}
    \caption{(a) compares active learning methods in different budget regimes, and (b) compares them using different self-supervised learning features: SCAN and DINO.}
\end{figure*}

\begin{wrapfigure}[8]{r}{0.38\textwidth}
    \vspace{-8mm}
    \centering
    \includegraphics[width=1.0\linewidth]{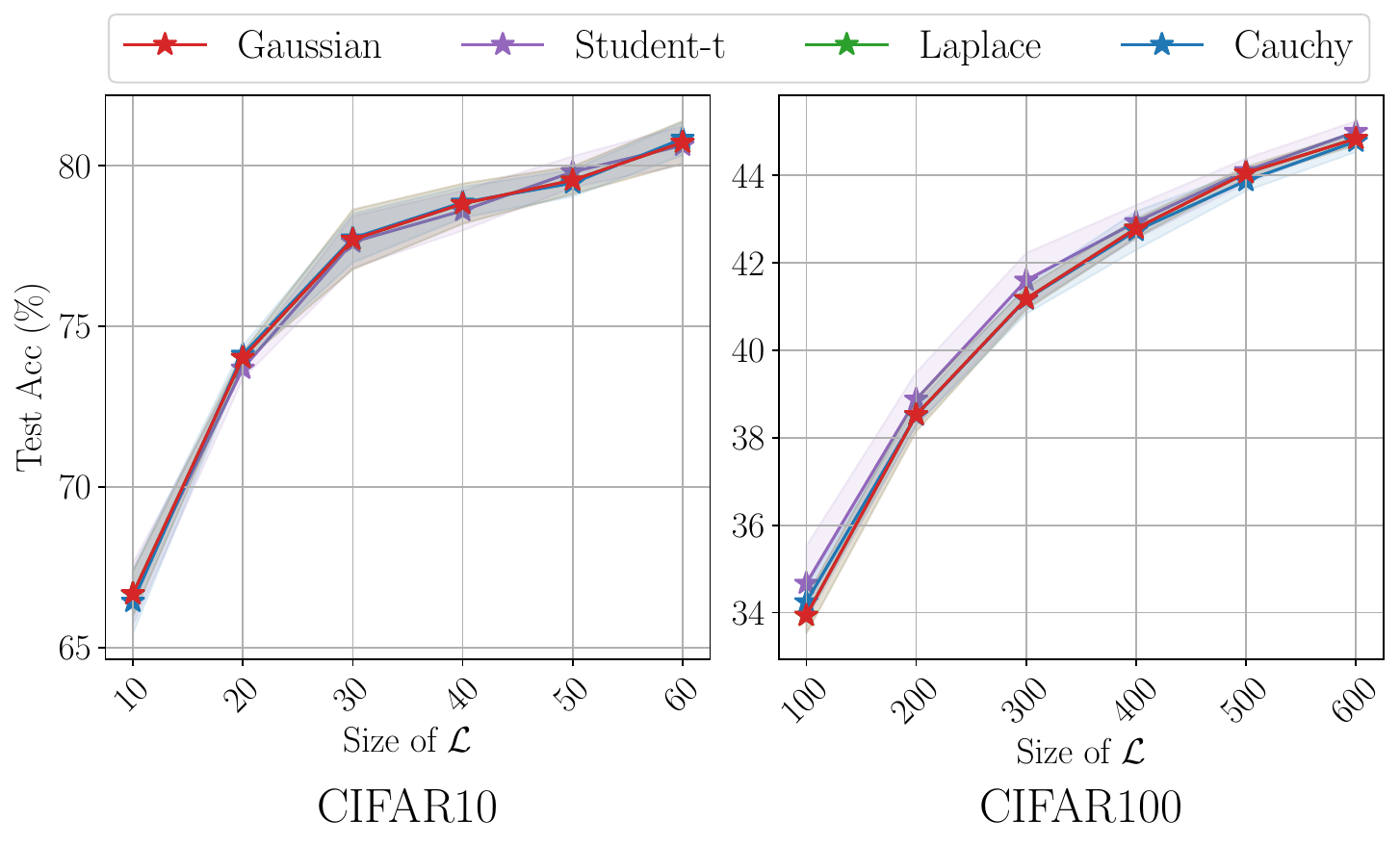} 
    \vspace{-7.5mm}
    \caption{Comparison of kernels.}
    \label{fig:kernels}
\end{wrapfigure}
\subsubsection{Different Kernel Functions} 
We compare MaxHerding with different kernel functions -- Gaussian, Student-t, Laplace and Cauchy -- on CIFAR10 and CIFAR100 datasets.
These kernels are all functions of $\lVert \vecx - \vecx' \rVert/\delta$;
we fix $\delta=1$ for all kernels.
\cref{fig:kernels} shows that all choices of kernels perform similarly; Student-t is very slightly better than others on CIFAR100, but the margin is not significant.
MaxHerding is thus robust to the choice of kernel.

\section{Conclusion}
In this study, we introduced a generalized notion of coverage,
which allows for smoother measurements of whether a point is ``representative'' of nearby data points.
We proposed novel low-budget active learning approaches aimed at maximizing this generalized coverage, employing both greedy and non-greedy algorithms.
We show that our greedy algorithm, MaxHerding, is closely connected to kernel herding as well as including ProbCover as a special case; it is simple and fast, but surpasses the performance of existing active learning methods.

The non-greedy algorithm, a version of kernel $k$-medoids, performs slightly better than MaxHerding, but its computation demands are significantly higher.
Therefore, we recommended practitioners prioritize MaxHerding over the non-greedy algorithm kernel $k$-medoids algorithm.

Recently, active learning methods designed for fine-tuning~\cite{activeft2023xie} have gained popularity. It would be an interesting future work to demonstrate how low-budget active learning approaches perform in the fine-tuning of large models.

\section*{Acknowledgements}
This work was enabled in part by support provided by the
Natural Sciences and Engineering Research Council of Canada,
the Canada CIFAR AI Chairs program,
Mitacs through the Mitacs Accelerate program,
Advanced Research Computing at the University of British Columbia,
the BC DRI Group,
and the Digital Research Alliance of Canada.
Junhyug Noh was supported by Institute of Information \& communications Technology Planning \& Evaluation (IITP) grant funded by the Korea government (MSIT) (No.RS-2022-00155966).

\bibliographystyle{splncs04}
\bibliography{main}

\clearpage
\appendix

\section{Proofs}

\subsection{Proof for \cref{thm:ourbound}}
\label{app:subsec:ourbound}

\ourbound*

\begin{proof}
    We begin with
    \begin{align*}
        &\Pr_{\vecx}(f(\vecx) \ne \hat f(\vecx)) = \E \left[ \mathbbm{1}[f(\vecx) \neq \hat{f}(\vecx) ] \right] \\ 
        &= \E \left[ \mathbbm{1}[f(\vecx) \neq \hat{f}(\vecx) ] (1 - \max_{\vecx' \in \mathcal{L}} k(\vecx, \vecx') ) \right] + \E \left[ \mathbbm{1}[f(\vecx) \neq \hat{f}(\vecx) ]  \cdot \max_{\vecx' \in \mathcal{L}} k(\vecx, \vecx') \right] \\
        &\leq 1 - \E \left[ \max_{\vecx' \in \mathcal{L}} k(\vecx, \vecx') \right]
        + \E \left[ \mathbbm{1}[f(\vecx) \neq \hat{f}(\vecx) ] \cdot \max_{\vecx' \in \mathcal{L}} k(\vecx, \vecx') \right]
    ,\end{align*}
    using in the last line that $\mathbbm{1}[f(\vecx) \ne \hat f(\vecx)] \le 1$.
    Now consider the value
    \[
        g(\vecx) = \mathbbm{1}[f(\vecx) \neq \hat{f}(\vecx) ] \cdot \max_{\vecx' \in \mathcal{L}} k(\vecx, \vecx')
    .\]
    Recall that $\hat f(\vecx)$ is $f(\vecx')$, where $\vecx' = \argmin_{\tvecx \in \mathcal L} \lVert \vecx - \tvecx \rVert$.
    Thus, for any given $\vecx$, there are two cases:
    \begin{enumerate}
        \item $f(\vecx) = \hat f(\vecx)$. In this case, $g(\vecx) = 0$, and is hence upper-bounded by \[ \max_{\vecx' : f(\vecx) \ne f(\vecx')} k(\vecx, \vecx') \ge 0 .\]
        \item $f(\vecx) \ne \hat f(\vecx)$.
            Let $\tvecx$ be the nearest point in $\mathcal L$ to $\vecx$.
            We know that $f(\vecx) \ne f(\tvecx)$,
            and hence
            \[
                k(\vecx, \tvecx)
                \le \max_{\vecx' : f(\vecx) \ne f(\vecx')} k(\vecx, \vecx')
            .\]
            We also know that because $\tvecx$ is the closest point in $\mathcal L$ to $\vecx$,
            \[
                k(\vecx, \tvecx)
                = \max_{\vecx' \in \mathcal L} k(\vecx, \vecx')
            .\]
            Thus
            \[
                \max_{\vecx' \in \mathcal L} k(\vecx, \vecx')
                \le \max_{\vecx' : f(\vecx) \ne f(\vecx')} k(\vecx, \vecx')
            ,\]
            as desired.
    \end{enumerate}
    Thus $\E g(\vecx) \le \E \max_{\vecx' : f(\vecx) \ne f(\vecx')} k(\vecx, \vecx')$,
    giving the desired result.
    \qed
\end{proof}

\subsection{Proof for \cref{thm:bound}}
\label{app:subsec:bound}

\bound*

\begin{proof}
    When $k$ is a top-hat function, the first term of \eqref{eq:ourbound} becomes
    \[
        1 - \E \left[ \max_{\vecx' \in \mathcal{L}} k(\vecx, \vecx') \right]
        = 1- \E \Big[ \max_{\vecx' \in \mathcal L} \mathbbm{1}\left[ \lVert \vecx - \vecx' \rVert \le \delta \right] \Big]
    ,\]
    agreeing with the first term of \eqref{eq:probcover_bound}.
    For the other term, we have that
    \begin{multline*}
        \E \max_{\vecx' : f(\vecx) \ne f(\vecx')} k(\vecx, \vecx')
        = \E \max_{\vecx' : f(\vecx) \ne f(\vecx')} \mathbbm{1}[ \lVert \vecx - \vecx' \rVert \le \delta ]
    \\
        = \Pr\left( \exists \vecx' \text{ s.t. } f(\vecx) \ne f(\vecx') \text{ and } \lVert \vecx - \vecx' \rVert \le \delta \right)
    ,\end{multline*}
    which is one minus the probability that all points within a radius $\delta$ of $\vecx$ have the same label.
    Thus, \eqref{eq:probcover_bound} is a special case of \eqref{eq:ourbound}.
    \qed
\end{proof}

\subsection{Proof for \cref{thm:maxherding}}
\label{app:subsec:maxherding}

\maxherding*

\begin{proof}
    We will show the second term in \cref{eq:maxherding-reframed} is equal to $0$, and the first term is equivalent to \cref{eq:max_herding}. For $\forall \, \mathbf{x} \in \mathcal{L}, \vecx' \neq \vecx \in \mathcal{X}$, since $k(\vecx, \vecx) \geq k(\vecx, \vecx')$,
    \begin{align}
        k(\mathbf{x}, \tvec{x}; \mathcal{L}) &= \max(k(\mathbf{x}, \tvec{x}) - \max_{\mathbf{x}' \in \mathcal{L}} k(\mathbf{x}, \mathbf{x}'), 0) \\
        &= \max(k(\mathbf{x}, \tvec{x}) - k(\mathbf{x}, \mathbf{x}), 0) = 0.
    \end{align}
    Hence, the second term becomes $0$.
    Now, we convert the max kernel function into the following form and plug it into the first term,
    \begin{align}
        k(\mathbf{x}, \tvecx; \mathcal{L}) = \max_{\vecx' \in \mathcal{L} \cup \{ \tvec{x} \}} k(\vecx, \vecx') - \max_{\vecx' \in \mathcal{L} } k(\vecx, \vecx').
    \end{align} 
    Then, the first term becomes \cref{eq:max_herding}. \qed
\end{proof}

\begin{figure*}[t!]
    \centering
    \begin{subfigure}[b]{0.24\textwidth} %
        \includegraphics[width=\textwidth]{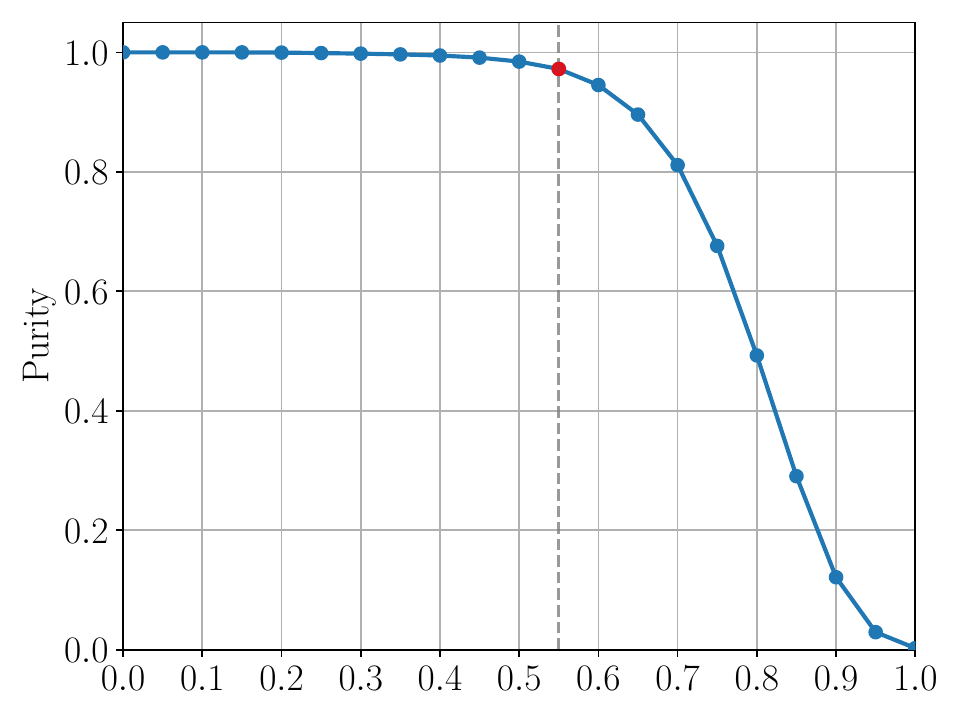}
        \caption{CIFAR10 }
        \label{app:fig:threshold_cifar10}
    \end{subfigure}
    \hfill
    \begin{subfigure}[b]{0.24\textwidth} %
        \includegraphics[width=\textwidth]{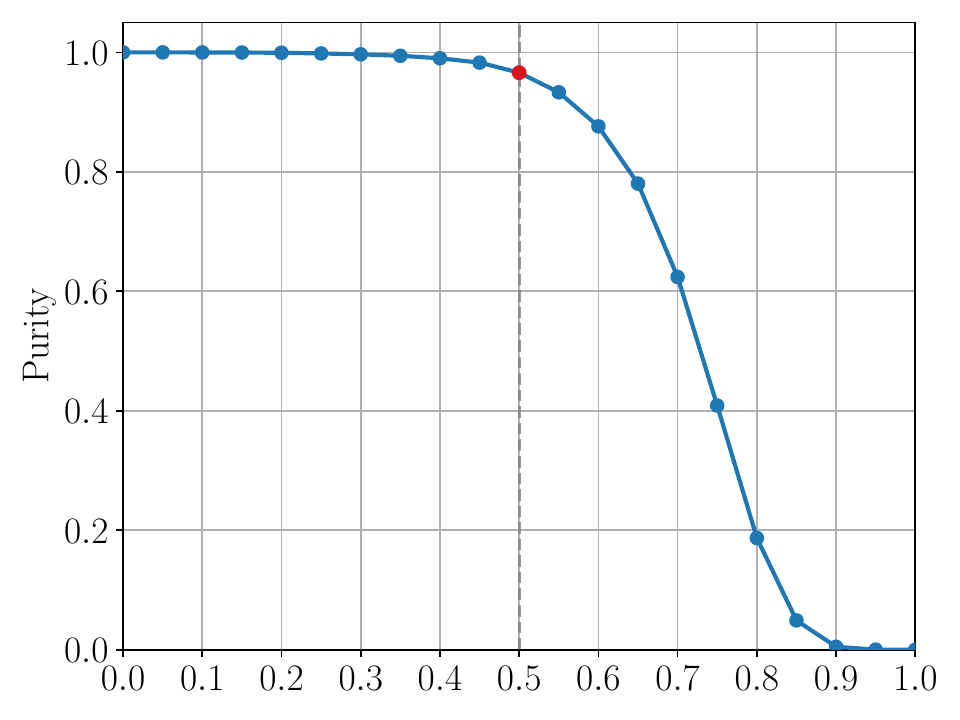}
        \caption{CIFAR100}
        \label{app:fig:threshold_cifar100}
    \end{subfigure}
    \hfill
    \begin{subfigure}[b]{0.24\textwidth} %
        \includegraphics[width=\textwidth]{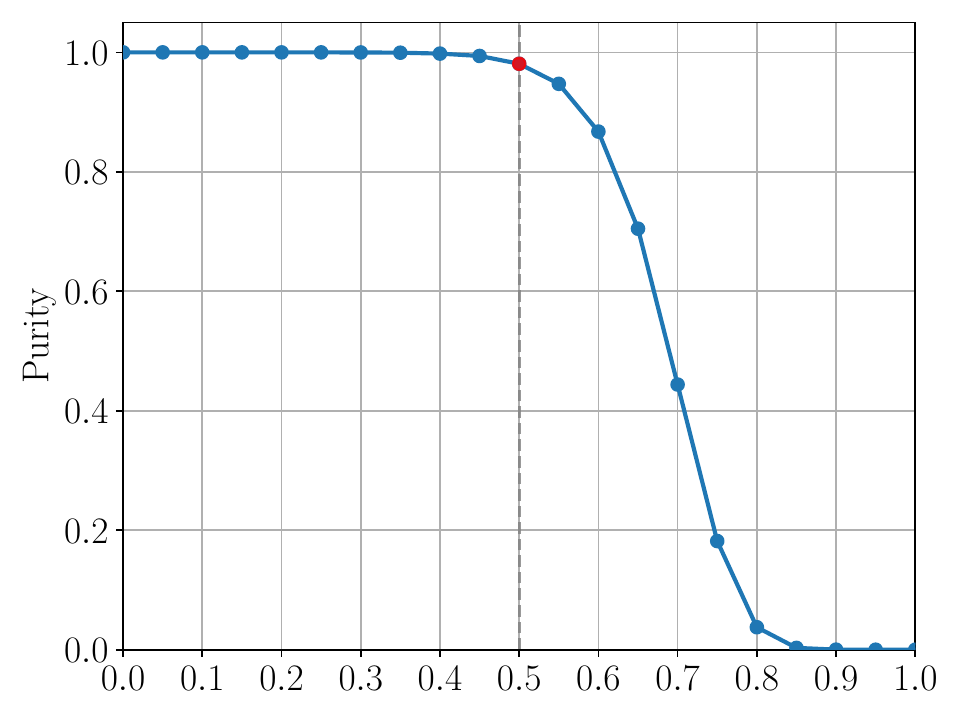}
        \caption{TinyImageNet }
        \label{app:fig:threshold_tinyimagenet}
    \end{subfigure}
    \hfill
    \begin{subfigure}[b]{0.24\textwidth} %
        \includegraphics[width=\textwidth]{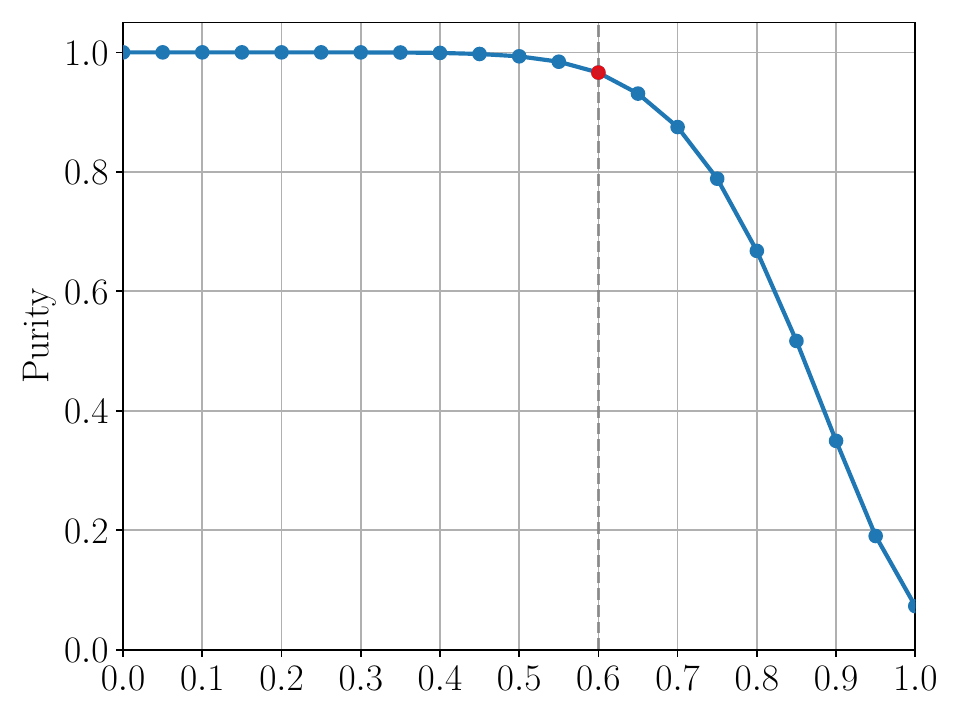}
        \caption{ImageNet}
        \label{app:fig:threshold_imagenet}
    \end{subfigure}
    \caption{Selection of $\delta$-radius using the heuristics suggested by \cite{probcover2022yehuda}.}
    \label{app:fig:delta}
\end{figure*}

\section{Determination of $\delta$ in ProbCover}
\label{app:sec:delta}

Yehuda~\etal\cite{probcover2022yehuda} suggest a heuristic to determine the radius parameter $\delta$ for ProbCover as follows:
\begin{enumerate}
    \item For given self-supervised learning features, \eg SimCLR, it obtains a pseudo-label for each feature using $k$-Means.
    \item It computes the purity of a blob centered at each feature, then, counts how many blobs are ``pure'' (a blob is pure if no features inside the blob are assigned to different clusters).
    \item It finds a radius that gives around $95\%$ of overall purity.
\end{enumerate}

\begin{python}[caption={Python code for selecting $\delta$ based on \cite{probcover2022yehuda}}, label={lst:python_code}]
import numpy as np
import torch
from fast_pytorch_kmeans import KMeans

def compute_dists(features):
    dists = torch.cdist(features, features, p=2)
    return dists

def blob_purity(dists, center_idx, radius, assignments):
    label = assignments[center_idx]
    dist = dists[center_idx]
    total_count = torch.sum(dist < radius)
    cond = torch.logical_and(
        dist < radius, assignments == label)
    pure_count = torch.sum(cond)
    
    if total_count == pure_count:
        return 1
    return 0

def compute_purity(feature_path, num_classes, ratio=1.0):
    # Load self-supervised learning features
    with open(feature_path, 'rb') as f:
        features = np.load(f)
    features = torch.from_numpy(features)

    # Normalize features
    features= torch.nn.functional.normalize(features, dim=1)

    # Find pseudo labels using k-Means
    kmeans = KMeans(n_clusters=num_classes, mode='euclidean')
    assignments = kmeans.fit_predict(features)

    # Compute a distance matrix
    dist_matrix = compute_dists(features)

    best_purity_radius, is_first, purity_rates = 0, True, []
    radiuses = np.linspace(0.05, 1.0, 20)
    for r, radius in enumerate(radiuses):
        # Compute purity for a given radius
        purity_count = 0
        for i in range(num_samples):
            purity_count += blob_purity(
                dist_matrix, i, radius, assignments)
        purity_rate = purity_count / num_samples
        purity_rates.append(purity_rate)

        # If purity < 0.95, determine the "optimal" radius
        if is_first and purity_rate < 0.95:
            best_purity_radius = radiuses[r-1]
            is_first = False
    return best_purity_radius
\end{python}

Despite its importance for the algorithm given that the performance is sensitive to $\delta$, its implementation has not been released.
Hence, we provide our implementation in \cref{lst:python_code}.\footnote{One of the authors of ProbCover have confirmed that our implementation is correct. But we have not been able to obtain their implementation.} 
Also, based on our implementation, we compute the ``optimal'' $\delta$ for each dataset as provided in \cref{app:fig:delta}.

\section{Experiment Results with Linear Classifiers}
\label{app:sec:exp_linear}

We provide the results with a linear classifier on the benchmark datasets in \cref{app:subsec:sota} and imbalanced datasets in \cref{app:subsec:imbal}.
As with \cref{subsec:exp:sota,subsec:exp:imbal}, we report the mean and standard deviation of test accuracies for 5 runs at each of 10 iterations, except for ImageNet where we conduct 3 runs at each of 5 iterations.

\subsection{Comparison with State-of-the-Arts}
\label{app:subsec:sota}

The performance of the linear classifier is slightly worse than 1-NN on CIFAR100 and ImageNet but better on TinyImageNet.

We can observe a similar pattern that is shown in the 1-Nearest Neighbor (NN) classifier. 
Uncertainty-based methods generally underperform compared to Random whereas representation-based methods perform significantly better than Random.
Notably, Typiclust and MaxHerding significantly outperform the other active learning methods, with MaxHerding emerging as the top performer across the datasets.

The disparity between MaxHerding and Typiclust, the second-best model overall, shows less pronounced differences on CIFAR100 and TinyImageNet. However, this distinction becomes more significant on CIFAR10 and ImageNet datasets.
Typiclust is particularly worse than MaxHerding on ImageNet, potentially because of its strict limitation on the number of clusters.

\begin{figure*}[t!]
    \centering
    \includegraphics[width=\linewidth]{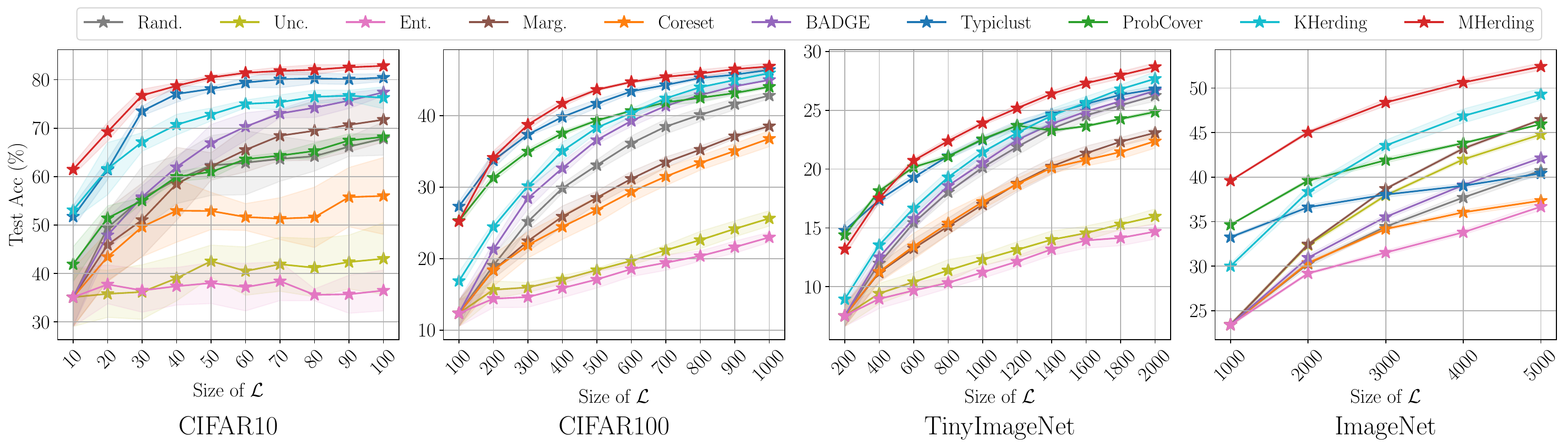}
    \caption{
    Comparison on benchmark datasets using a linear classifier
    }
    \label{fig:sota_linear}
\end{figure*}

\begin{figure*}[t!]
    \centering
    \includegraphics[width=\linewidth]{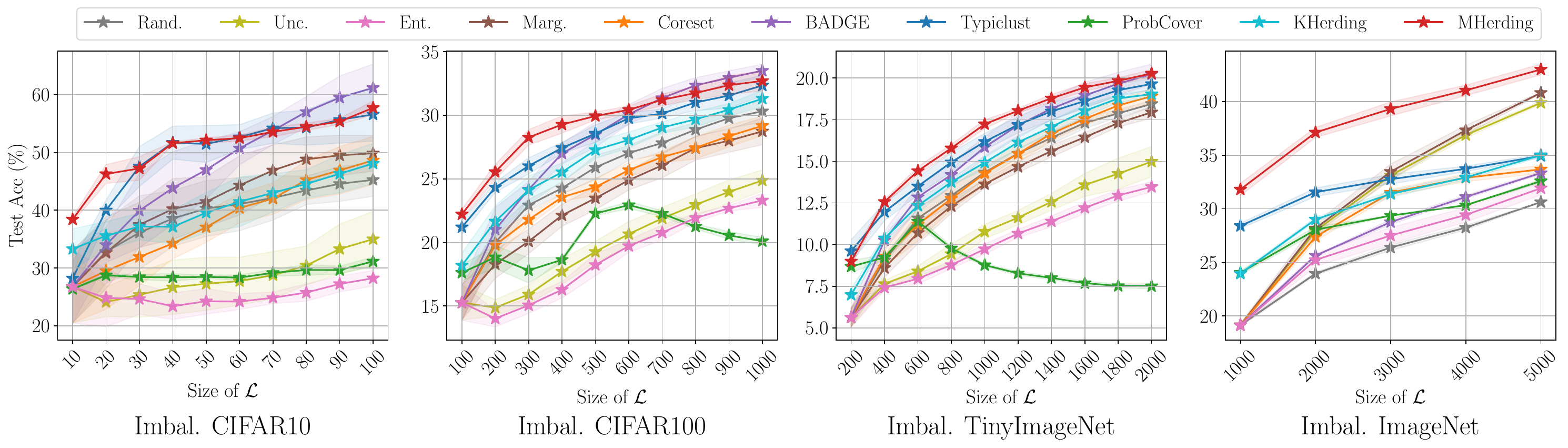}
    \caption{
    Comparison on imbalanced datasets using a linear classifier.  
    }
    \label{fig:imbal_linear}
\end{figure*}

\subsection{Comparison with Imbalanced Datasets}
\label{app:subsec:imbal}

\subsubsection{Generation of imbalanced datasets}
We generate imbalanced datasets from the balanced benchmark datasets following the standard long-tailed imbalance generation algorithm~\cite{imbal2019cui}.
Given an imbalance ratio $\rho = 0.02$, we define  $$\lvert \text{least frequent class} \rvert \defeq \lvert \text{samples per class} \rvert \times \rho.$$
Then, we randomly draw samples per class where the number of samples exponentially decays from the size of the most frequent class to the size of the least frequent class.

\subsubsection{Results}
We provide the results with the linear classifier on the imbalanced datasets in \cref{fig:imbal_linear}.
The overall trend aligns closely with the findings observed using the 1-NN classifier.
MaxHerding consistently matches or outperforms other methods, followed by Typiclust except on ImageNet.
Notably, Margin and Uncertainty show rapid improvement, surpassing Typiclust's performance after processing 3,000 data points.

\newcolumntype{D}{>{\centering\arraybackslash}p{4.8em}}
\begin{table}[t!]
\centering
\fontsize{9.0}{12.0}\selectfont
\begin{tabu}{c|D|D|D|D|D}
\hline
\multirow{2}{*}{Method}  & \multicolumn{5}{c}{Size of $\mathcal{L}$} \\
\cline{2-6}
& $200$ & $400$ & $600$ & $800$ & $1{,}000$  \\
\hline
\hline
Random & 4.4 $\pm$ 0.3 & 11.9 $\pm$ 0.7 & 20.7 $\pm$ 1.0 & 27.2 $\pm$ 0.7 & 33.4 $\pm$ 1.0 \\
Coreset & 4.4 $\pm$ 0.3 & 11.2 $\pm$ 0.8 & 19.3 $\pm$ 0.9 & 26.4 $\pm$ 0.8 & 33.2 $\pm$ 0.9 \\ 
Typiclust & 4.6 $\pm$ 0.2 & 10.9 $\pm$ 0.9 & 18.5 $\pm$ 0.6 & 27.2 $\pm$ 0.5 & 32.5 $\pm$ 0.7  \\
MaxHerding & \bftab{5.6} $\pm$ \bftab{0.1} & \bftab{13.4} $\pm$ \bftab{0.7} & \bftab{21.5} $\pm$ \bftab{0.8} & \bftab{28.9} $\pm$ \bftab{0.5} & \bftab{34.9} $\pm$ \bftab{0.6}   \\
\Xhline{2\arrayrulewidth}
\end{tabu}
\vspace{3mm}
\caption{Comparison in test performance using mAP$50$-$95$ with YOLOv5 on VOC.}
\label{app:tbl:voc}
\end{table}

\section{Low-Budget Active Learning for Object Detection}

In addition to the conventional image classification tasks, we further verify the effectiveness of the proposed MaxHerding compared to the other low-budget active learning methods: Coreset and Typiclust, for object detection tasks.
To this end, we use \href{https://github.com/ultralytics/yolov5}{YOLOv5} on PASCAL VOC dataset~\cite{voc2015Everingham}.
For the active learning methods, we use features from the penultimate layer.
\footnote{We do not include uncertainty-based methods since it is not obvious to convert the measure of uncertainty for object detection tasks.}

\cref{app:tbl:voc} reports the mean and standard deviation of five runs in the mAP$50$-$95$ metric with different size of labeled sets $\mathcal{L}$.
Interestingly, both Coreset and Typiclust are worse than Random, but the proposed MaxHerding generally outperforms Random.

To the best of our knowledge, low-budget acitve learning has not been well-studied for other tasks except image classification. 
It would be an interesting future work to compare different low-budget active learning methods for various tasks including object detection and semantic segmentation.

\end{document}